\documentclass{article}

\usepackage{microtype}
\usepackage{graphicx}
\usepackage{wrapfig}
\usepackage{subcaption}
\usepackage{booktabs} % for professional tables

\usepackage{hyperref}
\usepackage{makecell}

\usepackage[accepted]{icml2026}

\usepackage{amsmath}
\usepackage{amssymb}
\usepackage{mathtools}
\usepackage{amsthm}
\usepackage{multirow}
\usepackage{bm}
\usepackage[shortlabels]{enumitem}
\usepackage{titletoc}
\usepackage{comment}

\usepackage{booktabs}
\usepackage{multirow}
\usepackage{colortbl}
\usepackage{xcolor}
\definecolor{learnablegray}{gray}{0.94}

\newcommand{\bs}{\mathbf{s}}
\newcommand{\bu}{\mathbf{u}}

\usepackage[capitalize,noabbrev]{cleveref}

\theoremstyle{plain}
\newtheorem{theorem}{Theorem}[section]
\newtheorem{proposition}[theorem]{Proposition}

\newtheorem{corollary}[theorem]{Corollary}
\theoremstyle{definition}
\newtheorem{definition}[theorem]{Definition}

\newtheorem{example}[theorem]{Example}
\theoremstyle{remark}
\newtheorem{remark}[theorem]{Remark}

\usepackage[textsize=tiny]{todonotes}

\icmltitlerunning{Learning to Emulate Chaos: Adversarial Optimal Transport Regularization}

\begin{document}

\twocolumn[
  \icmltitle{Learning to Emulate Chaos: Adversarial Optimal Transport Regularization}
  
  \icmlsetsymbol{equal}{*}

  \begin{icmlauthorlist}
    \icmlauthor{Gabriel Melo}{equal,tel}
    \icmlauthor{Leonardo Santiago}{equal,ncsu,unicamp}
    \icmlauthor{Peter Y. Lu}{tufts}
  \end{icmlauthorlist}

  \icmlaffiliation{ncsu}{Department of Mechanical and Aerospace Engineering, North Carolina State University, Raleigh, NC}
  \icmlaffiliation{tel}{LTCI, Télécom Paris, Institut Polytechnique de Paris, Palaiseau}
  \icmlaffiliation{tufts}{Department of Electrical and Computer Engineering,
  Tufts University, Medford, MA}
  \icmlaffiliation{unicamp}{{Work performed while at the University of Campinas}}
  \icmlcorrespondingauthor{Gabriel Melo}{gabriel.silva@ip-paris.fr}
  \icmlcorrespondingauthor{Leonardo Santiago}{lsantia3@ncsu.edu}
  \icmlcorrespondingauthor{Peter Y. Lu}{peter.lu@tufts.edu}

  \icmlkeywords{Machine Learning, ICML}

  \vskip 0.3in
]

\printAffiliationsAndNotice{\icmlEqualContribution}

\begin{abstract}
Chaos arises in many complex dynamical systems, from weather to power grids, but is difficult to accurately model with data-driven methods such as machine learning emulators. While emulators are promising tools for accelerating simulations and solving inverse problems, they still struggle to learn chaotic dynamics, where sensitivity to initial conditions renders exact long-term forecasts infeasible, especially given noisy data. Recent work instead trains emulators to match the statistical properties of chaotic attractors, but these approaches often rely on handcrafted summary statistics or large, diverse multi-environment datasets. In this work, we propose a family of adversarial optimal transport objectives that can jointly learn high-quality summary statistics and a physically consistent emulator from a single noisy trajectory. We theoretically analyze and experimentally validate a Sinkhorn divergence formulation (2-Wasserstein) and a WGAN-style dual formulation (1-Wasserstein) of our approach. Numerical experiments across a variety of chaotic systems, including ones with high-dimensional spatiotemporal chaos, show that emulators trained using our proposed objectives have significantly improved long-term statistical fidelity.
\end{abstract}

\begin{comment}
\end{comment}
\section{Introduction}
Chaos is a generic feature of high-dimensional nonlinear dynamical systems \cite{medio_lines_2001} and is fundamental to our understanding of statistical physics \cite{dorfman_1999}. Chaotic dynamics are present in many scientific and engineering systems, from turbulent flows \cite{10.1093/acprof:oso/9780198722588.001.0001} in fluid mechanics, weather and ocean dynamics, and plasma physics to instabilities \cite{strogatz2015nonlinear} in artificial systems like power grids, traffic flow, and other complex networks. Modeling and simulating chaotic dynamics is therefore critical to solving difficult problems that help address important societal challenges like climate modeling \cite{lai2024machine}.

In many real-world applications, the underlying system dynamics are at least partially unknown and need to be learned directly from data. Recently developed data-driven emulators promise improved accuracy and significant speed-ups over traditional modeling approaches, including in domains such as weather prediction \cite{pathak2022fourcastnet, lam2023learning, kochkov2024neural, watt2023ace}, fluid and plasma physics \cite{PhysRevE.104.025205}, and molecular dynamics \cite{Unke2021,Musaelian2023}. This is achieved by fitting black box neural network-based surrogate models, such as neural operator architectures \cite{li2021fourier,Lu2021deeponet}, directly to trajectory data.

However, directly training emulators on noisy chaotic trajectories is difficult due to the defining properties of chaos such as the sensitivity to initial conditions \cite{medio_lines_2001}. We show in Section~\ref{subsec:noisy} that the standard mean squared error (MSE) loss becomes an increasingly poor objective for training emulators on chaotic dynamics over long rollouts, often causing surrogate models to degrade catastrophically during long-term forecasting \cite{chattopadhyay2024challengeslearningmultiscaledynamics, bonavita2024some}. Instead, approaches based on attractor statistics, like the one proposed in this work, provide much needed regularization that encourages the emulator to correctly capture the long-term behavior of chaotic systems.

In this work\footnote{The code is available at \url{https://github.com/gabrielmelo00/LearningToEmulateChaos_ICML26}.}, we use adversarial optimal transport regularization to adaptively learn highly informative summary statistics. The distribution of these statistics is then enforced, via the optimal transport cost, on an emulator that learns the dynamics of the chaotic system. In practice, the emulator $\hat{\mathbf{u}}_{t+1} = g(\mathbf{u}_{t})$, which learns the time evolution of the system, and the summary map $f(\mathbf{u})$, which provides the learned summary statistics, are trained simultaneously. The emulator is trained to \textit{minimize} both the standard MSE loss and an optimal transport cost that matches the summary statistic distribution of the model to the data, while the summary map is trained to \textit{maximize} the same optimal transport cost. Unlike choosing a set of handcrafted summary statistics that may not be informative enough to constrain the model to a high-dimensional chaotic attractor, this adversarial objective for the summary map ensures it learns an optimally discriminative and therefore highly informative set of summary statistics. This results in an efficient method for ensuring the statistics of the trajectories produced by the emulator match the statistics of the true chaotic attractor. %all from a single noisy trajectory.

\subsection{Contributions}

\begin{enumerate}
    \item \textbf{New statistics-based losses for emulating chaos.} We introduce a family of \textit{adversarial optimal transport objectives} that regularize the standard squared error prediction loss for emulator training. These new losses \textit{learn to emulate} chaotic systems by simultaneously learning (i) an adversarial set of informative summary statistics and (ii) an emulator trained to match the distribution of these statistics to the noisy trajectory data. In practice, we use \textit{computationally tractable relaxations} of the $p$-Wasserstein cost in our proposed loss, including a Wasserstein GAN-style loss for $p=1$ and a Sinkhorn loss that provides an entropy-regularized relaxation valid for any choice of $p\ge 1$.

    \item \textbf{Theoretical analysis of the proposed adversarial regularization and its behavior in noisy settings.} Our analysis shows that the new loss is \textit{well-behaved} and provides an efficient way to \textit{capture the structure of high-dimensional chaotic attractors}. In particular, given a Lipschitz constraint on the summary map, the new objective is bounded by both the emulator MSE (dynamical) and the Wasserstein distance to the true high-dimensional attractor (statistical). We also show that, while compounding error from noise can significantly degrade direct prediction losses like MSE, our statistics-based loss is much more robust to noise.

    \item \textbf{Empirical validation of chaotic dynamics learned from a noisy trajectory.} Based on \textit{long-term statistical metrics}, our approach successfully trains emulators to simulate complex chaotic systems using only a single noisy trajectory. In particular, we test on \textit{three high-dimensional chaotic dynamical systems} (Lorenz-96, Kuramoto-Sivashinsky, Kolmogorov flow) and show that our method outperforms standard short-term MSE training as well as prior statistics-based approaches based on handcrafted features.
\end{enumerate}
\section{Background and Problem Setting}

\paragraph{Notations.}
Let $(\mathcal{X}, d_{\mathcal{X}})$ and $(\mathcal{Y}, d_{\mathcal{Y}})$ be metric spaces. We denote by $\mathcal{M}_+(\mathcal{X})$ the set of positive Radon probability measures on $\mathcal{X}$. Given a measurable map $T : \mathcal{X} \to \mathcal{Y}$ and a measure $\mu \in \mathcal{M}_+(\mathcal{X})$, the pushforward measure $T_{\#}\mu \in \mathcal{M}_+(\mathcal{Y})$ is defined by $T_{\#}\mu(B) := \mu(T^{-1}(B))$ for any measurable set $B \subseteq \mathcal{Y}$. The set of joint probability measures on $\mathcal{X}\times\mathcal{Y}$ with marginals $\mu \in \mathcal{M}_+(\mathcal{X})$ and $\nu \in \mathcal{M}_+(\mathcal{Y})$  is defined as $\Pi(\mu, \nu) = \{\pi \in \mathcal{M}_+(\mathcal{X}\times\mathcal{Y} ): \pi(A \times \mathcal{Y}) = \mu(A) \text{ and } \pi(\mathcal{X}\times B) = \nu(B)\;\;\forall A,B \subset \mathcal{X}\times \mathcal{Y}\}$. The  Kullback-Leibler
divergence between two measures $\mu$ and $\nu$ is $\mathrm{KL(\mu|\nu)}$.  We denote by $\mathcal{C}(\mathcal{X}, \mathcal{Y})$ the class of continuous maps $\mathcal{X} \to\mathcal{Y}$. The $\|\cdot\|_F$ is the Frobenius norm.

\subsection{Dynamical Systems}
\label{sec:ds_def}
\paragraph{Setup.}
Let $(\mathcal{U}, d_\mathcal{U})$ be a compact metric space (e.g., a compact attractor). The true one-step dynamics is a Borel-measurable map $\Phi: \mathcal{U}\to \mathcal{U}$ admitting an invariant and ergodic probability measure $\mu$, i.e. $\Phi_\#\mu = \mu$. We observe a stationary trajectory $ (\bu_t)_{t=0}^{T-1}$ generated by 
\begin{equation*}
    \bu_{t+1} = \Phi(\bu_{t}), \quad \bu_{t}\sim \mu,
\end{equation*} so that $ (\bu_t)_{t=0}^{T-1}$  forms a stationary ergodic Markov chain with marginal law $\mu$. 

\begin{definition} [Emulator]
 An emulator is a Borel-measurable map $g:\mathcal{U} \to \mathcal{U}$ used as a one–step predictor of the dynamics $\Phi$. Given a state $\bu_{t}\in \mathcal{U}$, it outputs the prediction $\hat \bu_{t+1} := g(\bu_{t})\in \mathcal{U}$.When the input is distributed according to $\mu$, the emulator induces the pushforward measure $\hat{\mu} := g_{\#}\mu$ on $\mathcal{U}$.  
\end{definition}
    
\begin{definition} [Rollout]
Given an emulator $g:\mathcal{U} \to \mathcal{U}$, we denote by $g^{\circ k}$ its $k$-fold composition. The corresponding $k$-step autonomous rollout from state $\bu_t$ is defined as $\hat \bu_{t+k} := g^{\circ k}(\bu_t)$.
\end{definition}

\paragraph{Learning objective and chaotic dynamics.}
We are interested in learning an emulator $g$ of the dynamics $\Phi$ in the chaotic regime.
In chaotic systems, $\Phi$ exhibits sensitive dependence on initial conditions, so that long-term
trajectory prediction is intrinsically unstable.
Nevertheless, such systems admit a natural invariant physical measure \cite{medio_lines_2001} supported on the chaotic
attractor, which is preserved by the dynamics and governs the long-term statistical behavior.
By ergodicity, time averages along a single typical trajectory converge to expectations under this
invariant measure. For any observable $f : \mathcal{U} \to \mathbb{R}$,
\begin{equation}
\label{eq:ergodic_average}
\lim_{T \to \infty}
\frac{1}{T}
\sum_{t=0}^{T-1}
f(\bu_t)
=
\int_{\mathcal{U}} f(\bu)\, \text{d}\mu(\bu)
=
\mathbb{E}_{\mu}[f(\bu)],
\end{equation}
making $\mu$ the fundamental object that characterizes the asymptotic dynamics.
Accordingly, our goal is not to match trajectories, but to learn an emulator whose induced dynamics
preserve the invariant statistics of $\Phi$.

\subsection{Optimal Transport}
The optimal transport (OT) cost of transporting between the two probability measures $\mu, \nu$ living on $\mathcal{X}$  is defined as the solution of the Monge-Kantorovich problem
\begin{equation}
    \label{kantorovich}
    \mathcal{W}_c(\mu,  \nu) = \min_{\pi\in \Pi(\mu,  \nu)} \int_{\mathcal{X}\times \mathcal{X}}c(x, x')\,\text{d}\pi(x, x').
\end{equation}For $c(x, x') = d_\mathcal{X}(x, x')^p$ and $p \geq 1$, $\mathcal{W}_c^{1/p}$ is the $p$-Wasserstein distance.  
The entropic regularized OT optimization \cite{cuturi2013sinkhorn} writes 
\begin{equation*}
    \label{eot}
    \mathcal{W}_c^\varepsilon(\mu,  \nu) = \min_{ \substack{\pi\in \Pi(\mu,  \nu)}} \int c(x, x')\,\text{d}\pi(x, x')  + \varepsilon\,\mathrm{KL}(\pi|\mu \otimes\nu).
\end{equation*}
This formulation is differentiable \cite{peyré2025optimaltransportmachinelearners} and its empirical version is efficiently solved via the Sinkhorn algorithm \cite{sinkhorn1964relationship}. 

To eliminate the entropic bias (where $\mathcal{W}_c^\varepsilon(\mu, \mu) \neq 0$), we employ the Sinkhorn Divergence:
\begin{equation}
    S^\varepsilon_c(\mu, \nu) := \mathcal{W}_c^\varepsilon(\mu, \nu) - \frac{1}{2} \mathcal{W}_c^\varepsilon(\mu, \mu) - \frac{1}{2} \mathcal{W}_c^\varepsilon(\nu, \nu),
\end{equation}ensuring $S^\varepsilon_c(\mu, \nu) = 0$ if and only if $\mu = \nu$, interpolating between OT and Maximum Mean Discrepancy (MMD) \cite{feydy2018interpolatingoptimaltransportmmd}.

\section{Related Work}

\paragraph{Statistics-based regularization for emulating chaos.}
Training emulators on chaotic dynamics using standard squared-error losses is fundamentally limited by the exponential divergence of trajectories. \citet{li2022learning} addressed long-term behavior by proposing a Sobolev norm loss. However, this approach struggles in noisy settings where the Sobolev norm becomes dominated by noise.

Several works have approached the problem of preserving invariant measures directly. \citet{yang2023optimal} used optimal transport to fit parameterized low-dimensional dynamical models, but the PDE-constrained formulation scales poorly. \citet{botvinick2023learning} modeled dynamics through Fokker--Planck equations using Wasserstein distances, but this requires estimating full probability distributions on mesh grids. \citet{platt2023constraining} proposed constraining reservoir computers by explicitly enforcing dynamical invariants such as the Lyapunov spectrum and fractal dimension. \citet{schiff2024dyslim} proposed DySLIM, which uses Maximum Mean Discrepancy (MMD) regularization to match invariant measures.

Alternative approaches leverage Koopman theory or architecture-specific techniques. \citet{cheng2025pfnn} introduced the Poincar\'{e} Flow Neural Network, which uses Koopman theory to linearize chaotic evolution in a learned feature space,  addressing contraction and measure invariance without explicit distributional losses. For RNN architectures, \citet{mikhaeil2022difficulty} demonstrated the inherent difficulty of training on chaos, while \citet{hess2023generalized} proposed generalized teacher forcing for chaotic time series. \citet{jiang2025hierarchical} proposed an architecture inspired by implicit integration, and \citet{wang2024beyond} proved fundamental limitations of closure models for chaotic systems and proposed physics-informed neural operators as an alternative.

Most closely related to our work, \citet{jiang2023training} introduced two approaches for training neural operators to preserve invariant measures. Their first approach uses an optimal transport loss to match distributions of hand-crafted summary statistics, but requires expert knowledge to select informative statistics. Their second approach uses contrastive learning to automatically extract time-invariant features, but requires a diverse multi-environment dataset containing trajectories from many different dynamical regimes.

Our work addresses both limitations by \emph{adversarially learning} summary statistics jointly with the emulator. This approach automatically discovers informative statistics without requiring domain expertise and can learn from a single noisy trajectory.

\paragraph{Optimal transport and adversarial objectives.}
Adversarial objectives provide a flexible framework for distribution matching, most prominently through Generative Adversarial Networks (GANs) \cite{goodfellow2014generative}. Wasserstein GANs (WGANs) replace the standard GAN loss with the dual formulation of the $1$-Wasserstein distance, leading to improved training stability and more informative gradients \cite{arjovsky2017wassersteingan,gulrajani2017improved}. In practice, the Lipschitz constraint required by the Kantorovich--Rubinstein dual is only approximately enforced, and parametrized critics do not generally provide a consistent approximation of the true Wasserstein distance \cite{mallasto2019well,stanczuk2021wasserstein}.

Kernel-based discrepancies such as MMD \cite{gretton2012kernel} offer stable and unbiased objectives, but do not explicitly capture geometric structure and may struggle when distributions concentrate on low-dimensional manifolds embedded in high-dimensional spaces \cite{binkowski2018demystifying}. Optimal transport metrics naturally address these settings by handling non-overlapping supports and encoding geometry, but their direct use in learning is hindered by computational cost, lack of smoothness, and statistical instability. Entropic regularization mitigates these issues by yielding smooth, efficiently computable OT objectives via Sinkhorn iterations \cite{cuturi2013sinkhorn}. Following \citet{genevay2018learning}, by parameterizing the ground cost and optimizing it to distinguish between model and data distributions, this framework becomes an adversarial min-max game. Here, a neural network learns a feature-space geometry that adaptively maximizes the discrepancy, forcing the generator to match the data distribution across its most salient geometric features.

%trim=3cm 10cm 3cm 6cm
\begin{figure*}[t]
    \centering
    \includegraphics[width=0.8\textwidth, trim=2cm 2.5cm 2cm 4cm,
    clip]{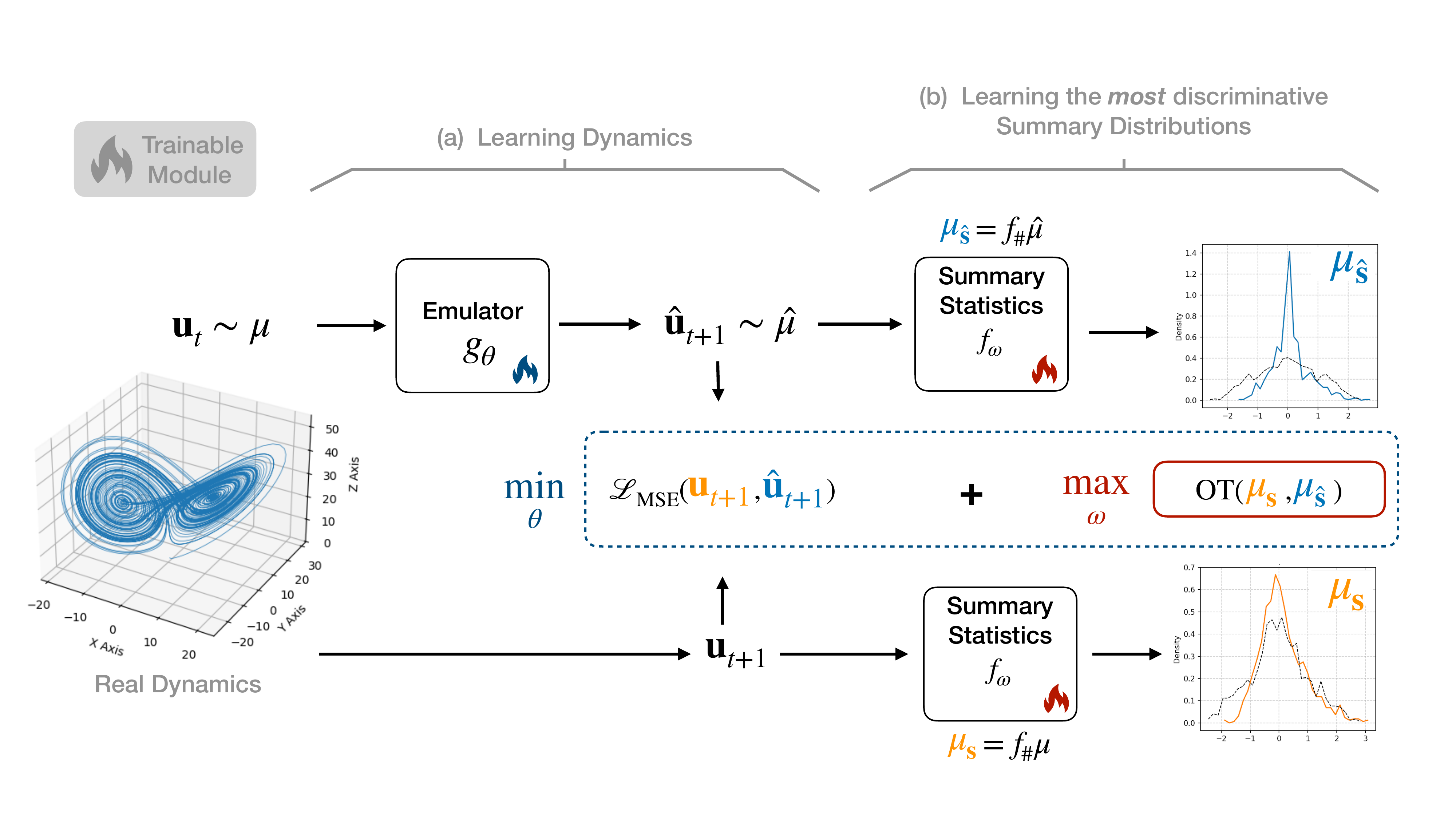}
    \caption{Adversarial optimal transport regularization for emulating chaotic dynamics. (a) Emulator training via one-step prediction loss with OT regularization.
    (b) Adversarial learning of summary statistics that maximize the discrepancy between real and generated trajectory distributions while minimizing the full loss.}
    \label{fig:architecture}
\end{figure*}
\section{Our Approach: Adversarial Optimal Transport Regularization}
\label{sec:ours}
Motivated by the fact that chaotic dynamics are characterized by their invariant (physical) measure
rather than long-term trajectory accuracy, we adapt adversarial optimal transport ideas to the
dynamical setting. Specifically, we compare the pushforward measures induced by the true dynamics
$\Phi$, with invariant distribution $\mu$, and by its emulator $g$, with induced distribution
$\hat{\mu} := g_{\#}\mu$. Rather than learning a discriminator or a ground cost in data space, we
introduce an adversarially learned summary map that exposes discrepancies in invariant statistics.
This leads to a min--max objective directly aligned with preserving the physical measure of the
system.

Let $(\mathcal{S}, d_{\mathcal{S}})$ denote the \emph{summary space}, and let
$\mathcal{F} \subset \mathcal{C}(\mathcal{U}, \mathcal{S})$ be a family of summary maps
$f : \mathcal{U} \to \mathcal{S}$, with $\bs = f(\bu)$. When $\bu \sim \mu$, the induced distribution of summaries is given by
$\bs \sim f_{\#}\mu$.
Fixing the optimal transport cost to be $c = {d_{\mathcal{S}}}^p$ for some $p \in [1,\infty)$,
we consider the adversarial objective
\begin{align}
\label{eq:genLoss}
\min_{g \in \mathcal{G}}& \; \max_{f \in \mathcal{F}} \; \mathcal{L}(g,f)
:=\notag\\
&=\mathcal{L}_{\mathrm{MSE}}\!\big(\bu_{t+1}, g(\bu_t)\big)
\;+\;
\lambda \, \mathcal{W}_{{d_{\mathcal{S}}}^p}
\big(f_{\#}\mu, f_{\#}\hat{\mu}\big),
\end{align} in which $\lambda>0$. The MSE term enforces local one-step fidelity, while the Wasserstein term targets agreement of
invariant statistics in the summary space. 

The maximization over $\mathcal{F}$ encourages the model to search for summary statistics that
highlight discrepancies between the true and emulator-induced invariant measures. By minimizing
this worst-case discrepancy, the emulator $g$ is driven to match the long-term statistical behavior
of the data, so that no summary within the chosen family can reliably distinguish the model from the
true dynamics.

The Wasserstein term in Eq.~\eqref{eq:genLoss} is non-differentiable with respect to both $g$ and
$f$ and is defined at the population level, rendering direct optimization via stochastic gradient
methods intractable. In practice, this issue is addressed by replacing the exact Wasserstein distance
with differentiable and sample-efficient relaxations, such as entropic regularization or
dual formulations, which admit stochastic optimization \cite{montesuma2024recent}.

Figure~\ref{fig:architecture} summarizes the proposed framework, where an emulator is trained with
a one-step prediction loss regularized by optimal transport between real and generated trajectory
distributions via a learnable summary map. We next describe WGAN-style and Sinkhorn-based instantiations.

\subsection{WGAN-style dual formulation}
When $p = 1$, the Wasserstein term in Eq.~\eqref{eq:genLoss} can be reformulated using the
Kantorovich--Rubinstein duality. Specifically, the 1-Wasserstein distance between the pushforward
measures $f_{\#}\mu$ and $f_{\#}\hat{\mu}$ can be written
\begin{align}
    \mathcal{W}_{{d_{\mathcal{S}}}^1}&(f_{\#}\mu, f_{\#}\hat{\mu})=\notag\\
    &= \sup_{\varphi \in \mathrm{Lip}_1(\mathcal{S})}
    \Big(
        \mathbb{E}_{\bs \sim f_{\#}\mu}[\varphi(\bs)]
        - \mathbb{E}_{\hat \bs \sim f_{\#}\hat{\mu}}[\varphi(\hat \bs)]
    \Big)\label{eq:wgan-dual}
\end{align}where $\mathrm{Lip}_1(\mathcal{S})$ denotes the set of $1$-Lipschitz real-valued functions
$\varphi : \mathcal{S} \to \mathbb{R}$, commonly referred to as critics. Substituting~\eqref{eq:wgan-dual} into the objective~\eqref{eq:genLoss}, and retaining the adversarial
maximization over summaries $f \in \mathcal{F}$, yields \begin{align}
\label{eq:wgan-game}
    &\min_{g \in \mathcal{G}} \;
    \max_{f \in \mathcal{F}} \;
    \max_{\varphi \in \mathrm{Lip}_1(\mathcal{S})}
    \mathcal{L}_{\mathrm{W}}(g,f):=\\
    &\mathcal{L}_{\mathrm{MSE}}\!\big(\bu_{t+1}, g(\bu_t)\big)
        +
        \lambda\Big(
            \mathbb{E}_{\mu}[\varphi \circ f(\bu)]
            -
            \mathbb{E}_{\mu}[\varphi\circ f \circ g(\bu)]
        \Big)
    \notag.
\end{align}This formulation corresponds to a WGAN-style objective in the summary space, where the critic
$\varphi$ acts as a statistical witness distinguishing the pushforward invariant measures.
Unlike classical WGANs operating on i.i.d.\ samples, the expectations here are taken with respect
to the invariant measure $\mu$ induced by the underlying dynamics.
The resulting objective is differentiable almost everywhere in all parameters and can be optimized
using stochastic gradient methods with standard Lipschitz-enforcing heuristics, such as gradient penalization \cite{gulrajani2017improved}, weight clipping \cite{arjovsky2017wassersteingan} and spectral normalization of the weights \cite{miyato2018spectral}.

\subsection{Sinkhorn divergence}
WGAN-style methods rely on the dual formulation of the $1$-Wasserstein distance and require
explicit enforcement of Lipschitz constraints on the critic. We instead adopt a primal,
entropically regularized optimal transport objective based on the Sinkhorn divergence,
which provides stable, fully differentiable gradients without requiring explicit Lipschitz
constraints on a critic \citep{cuturi2013sinkhorn,genevay2018learning}.

Following the learnable cost framework of \citet{genevay2018learning}, we define a feature map
$f : \mathcal{U} \to \mathcal{S}$ and induce a ground cost
\begin{equation}
\label{eq:learned_cost}
c_f(\bu,\hat \bu)
\;:=\;
{d_{\mathcal S}}^p\!\big(f(\bu), f(\hat \bu)\big),
\end{equation}
where $d_{\mathcal S}$ is a fixed metric on the feature space $\mathcal S$ (e.g., Euclidean)
and $p \ge 1$. This construction measures discrepancies after embedding states into a learned
summary space.

The resulting Sinkhorn divergence $S_{c_f}^{\varepsilon}(\mu,\hat\mu)$ is equivalent to
computing the Sinkhorn divergence between the pushforward measures $f_{\#}\mu$ and
$f_{\#}\hat\mu$ with respect to the fixed ground metric ${d_{\mathcal S}}^p$. Replacing the Wasserstein term in Eq.~\eqref{eq:genLoss} yields the objective
\begin{equation}
\label{eq:sinkhorn-game}
\min_{g \in \mathcal{G}}
\max_{f \in \mathcal{F}}
\mathcal{L}_{\mathrm{S}}(g,f)
:=
\mathcal{L}_{\mathrm{MSE}}\!\big(\bu_{t+1}, g(\bu_t)\big)
+
\lambda\,
S_{c_f}^{\varepsilon}\big(\mu, \hat{\mu}\big),
\end{equation}
where optimizing over $f$ learns a task-adaptive geometry that emphasizes relevant
distributional discrepancies.

\section{Theoretical Analysis}
\label{sec:theoreticalanalysis}

\subsection{Properties of the proposed objectives}
\label{subsec:valid}

In this section, we study the properties of the losses introduced above.
We first establish a bound showing that the optimal transport regularizer (in Eq.\ref{eq:genLoss})
is controlled by the one-step prediction error. By Proposition \ref{prop:wasserstein_bound}, the OT term is uniformly bounded in Eq.\ref{eq:genLoss}. Moreover, for $p=2$, it is controlled by the mean-squared prediction error, implying that near an MSE minimizer the OT regularizer remains bounded and cannot dominate the objective. This provides a local stability guarantee for the combined loss, but does not by itself ensure tractability or existence of a saddle point for the full min–max problem. Corollary \ref{cor:kstep} extends this control to multi-step rollouts, linking long-horizon distributional discrepancies to rollout accuracy.
\begin{proposition}[Dynamical summary space Wasserstein bound] 
\label{prop:wasserstein_bound} Given the true one-step dynamics $\Phi: \mathcal{U} \to \mathcal{U}$ with invariant measure $\mu$, and the emulator $g:\mathcal{U} \to \mathcal{U}$ with induced measure $\hat \mu = g_\#\mu$. Let $f: \mathcal{U}\to \mathcal{S}$ be an $L_f$-Lipschitz summary map, and fix $p \in [1, \infty)$. Then, \begin{equation*} \label{eq:summarybound} \mathcal{W}_{\,{d_\mathcal{S}}^p}(f_\#\mu,f_\#\hat \mu ) \leq {L_f}^p\;\mathbb{E}_{\bu_t \sim \mu}[d_\mathcal{U}\big(\bu_{t+1},g(\bu_t) \big)^p]. \end{equation*} \end{proposition}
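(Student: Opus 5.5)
The plan is to exhibit one explicit coupling between $f_\#\mu$ and $f_\#\hat\mu$ and bound its transport cost, since $\mathcal{W}_{d_\mathcal{S}^p}$ is an infimum over couplings. The key observation is that invariance gives $\Phi_\#\mu=\mu$. Hence if $\bu_t\sim\mu$, then $\bu_{t+1}=\Phi(\bu_t)\sim\mu$, while $g(\bu_t)\sim g_\#\mu=\hat\mu$. So the pair $(\bu_{t+1},g(\bu_t))$ is already a coupling of $\mu$ and $\hat\mu$.

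Concretely, I define the map $T:\mathcal{U}\to\mathcal{S}\times\mathcal{S}$ by $T(\bu)=\big(f(\Phi(\bu)),\,f(g(\bu))\big)$ and set $\pi:=T_\#\mu$. Measurability is fine: $f$ is continuous and $\Phi,g$ are Borel, so $T$ is Borel. The first marginal of $\pi$ is $(f\circ\Phi)_\#\mu=f_\#(\Phi_\#\mu)=f_\#\mu$, using invariance. The second marginal is $(f\circ g)_\#\mu=f_\#\hat\mu$. Therefore $\pi\in\Pi(f_\#\mu,f_\#\hat\mu)$.

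By the definition in Eq.~\eqref{kantorovich}, and using the change-of-variables formula for pushforwards,
\begin{equation*}
\mathcal{W}_{d_\mathcal{S}^p}(f_\#\mu,f_\#\hat\mu)\le\int d_\mathcal{S}(s,s')^p\,\mathrm{d}\pi(s,s')=\mathbb{E}_{\bu_t\sim\mu}\big[d_\mathcal{S}\big(f(\bu_{t+1}),f(g(\bu_t))\big)^p\big].
\end{equation*}
The Lipschitz property gives $d_\mathcal{S}(f(\bu_{t+1}),f(g(\bu_t)))\le L_f\,d_\mathcal{U}(\bu_{t+1},g(\bu_t))$. Raising both sides to the power $p\ge1$ preserves the inequality, since $x\mapsto x^p$ is monotone on $[0,\infty)$. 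Taking expectations then yields the claimed bound with constant $L_f^p$.

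There is no real obstacle. The only step needing care is verifying the first marginal, which is exactly where invariance of $\mu$ under $\Phi$ is used. A secondary point is finiteness and measurability of the integrand. Both hold because $\mathcal{U}$ is compact, so $d_\mathcal{U}$ is bounded, and because the cost is a nonnegative Borel function, so the integral is well defined even without compactness.
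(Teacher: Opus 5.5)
Your proof is correct and matches the paper's argument: the paper also uses $\Phi_\#\mu=\mu$ to write $f_\#\mu=(f\circ\Phi)_\#\mu$, takes the coupling $(\Phi\times g)_\#\mu$ pushed forward by $(f,f)$ (which is exactly your $T_\#\mu$), and concludes with the Lipschitz bound raised to the power $p$. Your remarks on measurability and on the nonnegativity of the integrand are valid and a bit more careful than the paper's version.
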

\begin{corollary}[$k$-step rollout Wasserstein bound]
\label{cor:kstep}
Under the assumptions of Proposition~\ref{prop:wasserstein_bound}, let $\Phi^{\circ k}$ and $g^{\circ k}$ denote the $k$–fold compositions
of the true dynamics $\Phi$ and emulator $g$, respectively, and define the $k$–step emulator measure
$\hat{\mu}_k := (g^{\circ k})_{\#}\mu$. Then, for any $k \in \mathbb{N}$ and any $p \in [1,\infty)$,
\begin{equation*}
    \mathcal{W}_{{d_{\mathcal{S}}}^p}\big(f_{\#}\mu, f_{\#}\hat{\mu}_k\big)
    \;\le\;
    {L_f}^p \,
    \mathbb{E}_{\bu_t \sim \mu}
    \Big[
        d_{\mathcal{U}}\big(\Phi^{\circ k}(\bu_t), g^{\circ k}(\bu_t)\big)^p
    \Big].
\end{equation*}
\end{corollary}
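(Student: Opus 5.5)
The plan is a coupling argument, in the spirit of Proposition~\ref{prop:wasserstein_bound} but with $\Phi$ and $g$ replaced by their $k$-fold compositions. Since the Wasserstein cost is a minimum over couplings, any explicit coupling gives an upper bound. We build one from the common initial state $\bu_t \sim \mu$.

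The first step rewrites the target measure using invariance. Because $\Phi_\#\mu = \mu$, induction on $k$ gives $(\Phi^{\circ k})_\#\mu = \mu$, using $(\Phi^{\circ k})_\# = \Phi_\# \circ (\Phi^{\circ (k-1)})_\#$. Hence $f_\#\mu = (f\circ \Phi^{\circ k})_\#\mu$. Measurability of $\Phi^{\circ k}$ and $g^{\circ k}$ follows because compositions of Borel maps are Borel. We also have $f_\#\hat\mu_k = (f\circ g^{\circ k})_\#\mu$.

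The second step is the coupling. Define the Borel map $T:\mathcal{U}\to\mathcal{S}\times\mathcal{S}$ by $T(\bu) = \big(f(\Phi^{\circ k}(\bu)), f(g^{\circ k}(\bu))\big)$, and set $\pi := T_\#\mu$. Its marginals are exactly $f_\#\mu$ and $f_\#\hat\mu_k$, so $\pi \in \Pi(f_\#\mu, f_\#\hat\mu_k)$. Then
\begin{equation*}
\mathcal{W}_{{d_\mathcal{S}}^p}\big(f_\#\mu, f_\#\hat\mu_k\big) \le \int d_\mathcal{S}(s,s')^p\,\mathrm{d}\pi(s,s') = \mathbb{E}_{\bu_t\sim\mu}\Big[d_\mathcal{S}\big(f(\Phi^{\circ k}(\bu_t)), f(g^{\circ k}(\bu_t))\big)^p\Big].
\end{equation*}
Applying the $L_f$-Lipschitz property of $f$ pointwise inside the expectation, together with monotonicity of $x\mapsto x^p$ on $[0,\infty)$, gives the claimed bound with the factor $L_f^p$.

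Alternatively, the whole argument amounts to applying Proposition~\ref{prop:wasserstein_bound} with $\Phi^{\circ k}$ in place of $\Phi$ and $g^{\circ k}$ in place of $g$. This is legitimate because $\mu$ is also $\Phi^{\circ k}$-invariant, and because the quantity written $\bu_{t+1}$ in that proposition plays the role of $\Phi^{\circ k}(\bu_t)$ here.

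The only delicate point is justifying that $f_\#\mu$ equals the law of $f(\Phi^{\circ k}(\bu_t))$, that is, the iterated invariance. Everything else is routine. No finiteness issue arises: $\mathcal{U}$ is compact and $f$ is continuous, so the integrand is bounded.
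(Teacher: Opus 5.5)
Your proposal is correct and matches the paper's proof. The paper notes $(\Phi^{\circ k})_\#\mu=\mu$ and then applies Proposition~\ref{prop:wasserstein_bound} with $\Phi^{\circ k}$ and $g^{\circ k}$ in place of $\Phi$ and $g$; your explicit coupling $T_\#\mu$ is exactly that proposition's coupling $(f,f)_\#(\Phi^{\circ k}\times g^{\circ k})_\#\mu$, written out directly.
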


Then, Proposition~\ref{prop:general_bound} provides an upper bound on the adversarial
summary-space objective in terms of the $p$-Wasserstein distance defined on the original
state space $\mathcal U$.
Corollary~\ref{cor:general_reduction} further shows that, when the summary hypothesis class
is sufficiently expressive to recover the state-space geometry up to a global scaling,
this bound is tight and the adversarial objective exactly reduces to the state-space
Wasserstein distance.

\begin{proposition}[General summary space Wasserstein bound]
\label{prop:general_bound}
Let $f: \mathcal{U}\to \mathcal{S}$ be an $L_f$-Lipschitz summary map, and fix $p \in [1, \infty)$. Then, 
\begin{equation*}
\label{eq:generalbound}
     \mathcal{W}_{\,{d_\mathcal{S}}^p}(f_\#\mu,f_\#\hat \mu ) \leq {L_f}^p\;\mathcal{W}_{\,{d_\mathcal{U}}^p}(\mu,\hat \mu ).
\end{equation*}
\end{proposition}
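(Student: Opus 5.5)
The plan is a coupling pushforward argument. Take an optimal coupling $\pi^\star \in \Pi(\mu,\hat\mu)$ for $\mathcal{W}_{d_\mathcal{U}^p}(\mu,\hat\mu)$; it exists because $\mathcal{U}$ is compact and $d_\mathcal{U}^p$ is continuous, so $\Pi(\mu,\hat\mu)$ is weakly compact and the cost functional is lower semicontinuous. If we prefer not to invoke existence, we can instead work with an $\eta$-optimal coupling and let $\eta\to 0$ at the end. From $\pi^\star$ we build a candidate coupling in summary space by pushing forward through the product map:
\[
\tilde\pi := (f\times f)_\#\pi^\star, \qquad (f\times f)(\bu,\bu') = \big(f(\bu),f(\bu')\big).
\]

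The first step checks that $\tilde\pi \in \Pi(f_\#\mu, f_\#\hat\mu)$. For measurable $B\subseteq\mathcal{S}$,
\[
\tilde\pi(B\times\mathcal{S}) = \pi^\star\big(f^{-1}(B)\times\mathcal{U}\big) = \mu\big(f^{-1}(B)\big) = f_\#\mu(B),
\]
and the second marginal is handled the same way. Measurability of $f\times f$ follows from continuity of $f$.

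The second step bounds the cost of $\tilde\pi$. By the change-of-variables formula for pushforwards and then the Lipschitz property $d_\mathcal{S}(f(\bu),f(\bu'))\le L_f\, d_\mathcal{U}(\bu,\bu')$, raised to the power $p$ (monotone on $[0,\infty)$),
\[
\int d_\mathcal{S}(s,s')^p\, d\tilde\pi(s,s') = \int d_\mathcal{S}\big(f(\bu),f(\bu')\big)^p\, d\pi^\star(\bu,\bu') \le L_f^p \int d_\mathcal{U}(\bu,\bu')^p\, d\pi^\star(\bu,\bu') = L_f^p\,\mathcal{W}_{d_\mathcal{U}^p}(\mu,\hat\mu).
\]

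Finally, since $\mathcal{W}_{d_\mathcal{S}^p}(f_\#\mu,f_\#\hat\mu)$ is a minimum over all couplings in $\Pi(f_\#\mu,f_\#\hat\mu)$, it is at most the cost of $\tilde\pi$, which gives the claim.

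This is the same mechanism that drives Proposition~\ref{prop:wasserstein_bound}. There the specific coupling $(\Phi, g)_\#\mu$ was used; here we take the optimal one.

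There is no serious obstacle. The only delicate point is the existence or attainment of the optimal coupling together with the measure-theoretic details: integrability, where $d_\mathcal{U}$ is bounded on the compact $\mathcal{U}$, and measurability of the pushforward. The $\eta$-optimal coupling route sidesteps the existence question entirely.
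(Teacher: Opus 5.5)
Your proposal is correct and is essentially the paper's proof: push an optimal coupling $\pi^\star\in\Pi(\mu,\hat\mu)$ forward by $(f,f)$ to get a feasible coupling of $f_\#\mu$ and $f_\#\hat\mu$, then apply the Lipschitz bound inside the cost integral. Your extra remarks fill in details the paper leaves implicit: existence of $\pi^\star$ by compactness, or the $\eta$-optimal fallback, and the explicit marginal check.
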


\begin{corollary}[General reduction to state space Wasserstein objective]
\label{cor:general_reduction}
If $\mathcal{U} \subseteq \mathcal{S} \subseteq \mathbb{R}^n$, $d_\mathcal{S} = d_\mathcal{U}$ are the Euclidean metric on $\mathbb{R}^n$. Let $\mathcal{F} \subseteq \mathrm{Lip}_{L_f}(\mathcal{U})$ be a class of
summary maps that contains
the scaling map $f(\bu)=L_f\bu$. Then,
\begin{equation*}
\label{eq:generalreduction}
     \sup_{f\in\mathcal{F}}\mathcal{W}_{\,{d_\mathcal{S}}^p}(f_\#\mu,f_\#\hat \mu) = {L_f}^p\;\mathcal{W}_{\,{d_\mathcal{U}}^p}(\mu,\hat \mu ).
\end{equation*}
\end{corollary}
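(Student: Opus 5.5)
The plan is to show two inequalities, using Proposition~\ref{prop:general_bound} for one and the scaling map for the other. For the upper bound, every $f\in\mathcal{F}$ lies in $\mathrm{Lip}_{L_f}(\mathcal{U})$. Proposition~\ref{prop:general_bound} therefore gives $\mathcal{W}_{d_\mathcal{S}^p}(f_\#\mu,f_\#\hat\mu)\le L_f^p\,\mathcal{W}_{d_\mathcal{U}^p}(\mu,\hat\mu)$ for each such $f$. Taking the supremum over $f\in\mathcal{F}$ yields $\sup_{f\in\mathcal{F}}\mathcal{W}_{d_\mathcal{S}^p}(f_\#\mu,f_\#\hat\mu)\le L_f^p\,\mathcal{W}_{d_\mathcal{U}^p}(\mu,\hat\mu)$.

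For the lower bound, we plug in the particular element $f^\star(\bu)=L_f\bu$, which is in $\mathcal{F}$ by hypothesis. Since $\mathcal{U}\subseteq\mathcal{S}$ and both spaces carry the Euclidean metric, $f^\star$ maps $\mathcal{U}$ into $\mathbb{R}^n$. If $\mathcal{S}$ is not closed under scaling, we read $\mathcal{S}$ as containing the image $L_f\mathcal{U}$; this is implicit in the assumption $f^\star\in\mathcal{F}\subseteq\mathcal{C}(\mathcal{U},\mathcal{S})$. The map $f^\star$ is a bijection onto its image and satisfies $d_\mathcal{S}(f^\star(\bu),f^\star(\bu'))=L_f\,d_\mathcal{U}(\bu,\bu')$.

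The key step is that the transport cost scales exactly under $f^\star$. First take any coupling $\pi\in\Pi(\mu,\hat\mu)$. Then $(f^\star\times f^\star)_\#\pi$ belongs to $\Pi(f^\star_\#\mu,f^\star_\#\hat\mu)$, and its cost is
\[
\int d_\mathcal{S}(f^\star(\bu),f^\star(\bu'))^p\,\text{d}\pi=L_f^p\int d_\mathcal{U}(\bu,\bu')^p\,\text{d}\pi .
\]
Conversely, every coupling $\gamma\in\Pi(f^\star_\#\mu,f^\star_\#\hat\mu)$ is supported on $L_f\mathcal{U}\times L_f\mathcal{U}$. Pushing it forward by the inverse map $\bs\mapsto \bs/L_f$ (measurable, indeed continuous) gives a coupling in $\Pi(\mu,\hat\mu)$ whose cost is $L_f^{-p}$ times the cost of $\gamma$. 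The coupling pushforward is therefore a cost-scaling bijection between the two coupling sets, so $\mathcal{W}_{d_\mathcal{S}^p}(f^\star_\#\mu,f^\star_\#\hat\mu)=L_f^p\,\mathcal{W}_{d_\mathcal{U}^p}(\mu,\hat\mu)$. In fact only the inequality $\ge$ is needed here, so only the converse direction (pulling back couplings via the inverse) is essential. This gives $\sup_{f\in\mathcal{F}}\ge L_f^p\,\mathcal{W}_{d_\mathcal{U}^p}(\mu,\hat\mu)$, and combining with the upper bound proves the equality.

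The main obstacle is mild: it is only the measure-theoretic bookkeeping that pushforwards of couplings have the right marginals and that the inverse map is defined on the support. This assumes $L_f>0$. In the degenerate case $L_f=0$ both sides equal zero and the statement is trivial.
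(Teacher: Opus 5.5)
Your proof is correct and takes the same route as the paper: the upper bound comes from Proposition~\ref{prop:general_bound}, and tightness comes from the scaling map $f(\bu)=L_f\bu$. You also supply the step the paper leaves implicit, namely the check (pulling couplings back through $\bs\mapsto\bs/L_f$) that this map actually attains $L_f^p\,\mathcal{W}_{d_\mathcal{U}^p}(\mu,\hat\mu)$; the paper only notes that the map lies in $\mathcal{F}$.
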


\begin{remark}
The results above highlight a central requirement for learning task-adaptive summary spaces
via optimal transport: the hypothesis class of summary maps must be suitably controlled. Without structural constraints on $f$, the adversarial objective may become ill-posed, lose
stability, or fail to admit meaningful guarantees. In this work, we enforce this through a global Lipschitz constraint on the summary hypothesis class, which guarantees
stability and bounds the summary-space transport cost by state-space prediction errors.
Other forms of control, such as bounded or spectral constraints, could provide similar
guarantees.
\end{remark}

% \begin{remark}
% When $\mathcal{F}$ is restricted to linear maps $f(\bu) = A\bu$, the optimal summary admits a 
% geometric interpretation via the Brenier optimal transport map $T^*$ satisfying $T^*_\#\mu = \hat{\mu}$. 
% The displacement field $\bu \mapsto \bu - T^*(\bu)$ describes how mass must move to transform 
% $\mu$ into $\hat{\mu}$, and its second-moment structure---the \emph{displacement covariance} 
% $C_T$---determines which linear projections expose the largest discrepancy. The eigenvectors 
% of $C_T$ identify the principal directions of transport, while the eigenvalues quantify the 
% variance of displacement along each direction.
% \end{remark}

\subsection{Robustness to noisy trajectory data}
\label{subsec:noisy}
In practice, observed trajectories are corrupted by measurement noise, and emulator predictions are initialized from noisy estimates of the system state. We now analyze how such perturbations affect the MSE and Wasserstein components of the objective~\eqref{eq:genLoss}, revealing a fundamental advantage of distributional regularization for chaotic systems.

\paragraph{Noise model.} We consider the following sources of Gaussian noise:
\begin{enumerate}[(i)]
    \item \emph{Measurement noise:} $\bu_{t+1}^{\mathrm{obs}} := \Phi(\bu_t) + \bm{\epsilon}_1$ where $\bm{\epsilon}_1 \sim \mathcal{N}(0, \sigma_1^2 I_d)$.
    \item \emph{Initial condition noise:} $\hat{\bu}_{t+1}^{\mathrm{noisy}} := g(\bu_t + \bm{\epsilon}_2)$ where $\bm{\epsilon}_2 \sim \mathcal{N}(0, \sigma_2^2 I_d)$.
\end{enumerate}
Both noise terms are assumed independent of each other and of the state $\bu_t \sim \mu$. In most cases, $\sigma_1^2 = \sigma_2^2$ since both types of noise originate from a noisy training trajectory.
\begin{definition}[Noisy measures]
Let $\mathcal{N}_\sigma := \mathcal{N}(0, \sigma^2 I_d)$ denote the isotropic Gaussian on $\mathbb{R}^d$. Define:
\begin{itemize}[leftmargin=*]
    \item The \emph{observed measure}: $\mu^{\mathrm{obs}} := \mu * \mathcal{N}_{\sigma_1}$ (convolution),
    \item The \emph{noisy initial measure}: $\tilde{\mu} := \mu * \mathcal{N}_{\sigma_2}$,
    \item The \emph{noisy emulator measure}: $\hat{\mu}^{\mathrm{noisy}} := g_\# \tilde{\mu}$.
\end{itemize}
\end{definition}
\subsubsection{MSE under noise}
The noisy MSE loss takes the form
\begin{equation}
\label{eq:noisy_mse}
    \mathcal{L}_{\mathrm{MSE}}^{\mathrm{noisy}} 
    := \mathbb{E}_{\bu_t \sim \mu,\, \bm{\epsilon}_1,\, \bm{\epsilon}_2}
    \Big[\big\|\Phi(\bu_t) + \bm{\epsilon}_1 - g(\bu_t + \bm{\epsilon}_2)\big\|^2\Big].
\end{equation}

\begin{proposition}[MSE noise decomposition]
\label{prop:mse_noise}
Under the noise model above, the noisy MSE decomposes as
\begin{equation*}
\label{eq:mse-decomposition}
    \mathcal{L}_{\mathrm{MSE}}^{\mathrm{noisy}} 
    = \mathcal{L}_{\mathrm{MSE}}^{\mathrm{clean}} 
    + d\sigma_1^2 
    + \sigma_2^2 \, \mathbb{E}_{\bu_t \sim \mu}\big[\|Dg(\bu_t)\|_F^2\big] 
    + O(\sigma_2^4),
\end{equation*}
where $\mathcal{L}_{\mathrm{MSE}}^{\mathrm{clean}} := \mathbb{E}_{\bu_t \sim \mu}[\|\Phi(\bu_t) - g(\bu_t)\|^2]$ is the noise-free MSE, $Dg$ denotes the Jacobian of $g$.
\end{proposition}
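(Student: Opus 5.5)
We expand the squared norm inside the expectation in Eq.~\eqref{eq:noisy_mse} and use the independence assumptions of the noise model. Write $e(\bu_t) := \Phi(\bu_t) - g(\bu_t)$ and $\delta(\bu_t,\bm{\epsilon}_2) := g(\bu_t+\bm{\epsilon}_2) - g(\bu_t)$. Then
\begin{equation*}
\Phi(\bu_t) + \bm{\epsilon}_1 - g(\bu_t+\bm{\epsilon}_2) = e(\bu_t) + \bm{\epsilon}_1 - \delta(\bu_t,\bm{\epsilon}_2).
\end{equation*}
Squaring gives three diagonal terms and three cross terms. The diagonal terms are $\mathbb{E}\|e\|^2 = \mathcal{L}_{\mathrm{MSE}}^{\mathrm{clean}}$, $\mathbb{E}\|\bm{\epsilon}_1\|^2 = d\sigma_1^2$, and $\mathbb{E}\|\delta\|^2$. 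Both cross terms involving $\bm{\epsilon}_1$ vanish, because $\bm{\epsilon}_1$ has mean zero and is independent of $(\bu_t,\bm{\epsilon}_2)$. The remaining work is to expand $\mathbb{E}\|\delta\|^2$ and the cross term $-2\,\mathbb{E}\langle e,\delta\rangle$ in powers of $\sigma_2$.

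For this we assume $g$ is smooth, say $C^3$ with bounded derivatives on a neighbourhood of the compact set $\mathcal{U}$; this regularity is implicit in the appearance of $Dg$. Conditioning on $\bu_t$, we Taylor expand
\begin{equation*}
\delta = Dg(\bu_t)\bm{\epsilon}_2 + \tfrac12 D^2g(\bu_t)[\bm{\epsilon}_2,\bm{\epsilon}_2] + R_3,
\qquad \|R_3\|\le C\|\bm{\epsilon}_2\|^3.
\end{equation*}
Gaussian moments give $\mathbb{E}\|\bm{\epsilon}_2\|^k = O(\sigma_2^k)$, and all odd moments vanish. It follows that
\begin{equation*}
\mathbb{E}\big[\|\delta\|^2 \mid \bu_t\big] = \mathbb{E}\big[\|Dg\,\bm{\epsilon}_2\|^2\big] + O(\sigma_2^4) = \sigma_2^2\|Dg(\bu_t)\|_F^2 + O(\sigma_2^4).
\end{equation*}
Here the cubic cross term $\langle Dg\,\bm{\epsilon}_2, D^2g[\bm{\epsilon}_2,\bm{\epsilon}_2]\rangle$ vanishes by oddness. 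For the trace identity we use $\mathbb{E}[\bm{\epsilon}_2\bm{\epsilon}_2^\top]=\sigma_2^2 I_d$. Integrating over $\bu_t\sim\mu$ produces the Jacobian term $\sigma_2^2\,\mathbb{E}_{\mu}\big[\|Dg(\bu_t)\|_F^2\big]$.

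The main obstacle is the cross term $-2\,\mathbb{E}\langle e,\delta\rangle$. Its linear part vanishes by symmetry, but its quadratic part is
\begin{equation*}
-\sigma_2^2\,\mathbb{E}_{\mu}\big\langle e(\bu_t), \Delta g(\bu_t)\big\rangle,
\end{equation*}
where $\Delta g$ is the componentwise Laplacian. This is also of order $\sigma_2^2$ and does not appear in the stated formula. We have three ways to handle it.
\begin{enumerate}[(a)]
\item Near a minimizer, $e$ is small, so the term is $O\big(\sigma_2^2\sqrt{\mathcal{L}_{\mathrm{MSE}}^{\mathrm{clean}}}\big)$. We would state it as absorbed into the approximation, or fold it into the clean-loss term.
\item We can add an explicit mild hypothesis, such as $g$ being locally affine to second order (so $\Delta g=0$) or $e$ being orthogonal in $L^2(\mu)$ to $\Delta g$. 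Under this hypothesis the term vanishes exactly.
\item We can report it as an additional correction alongside the stated terms.
\end{enumerate}
We would first try (a), stating the assumption precisely in the proof. We would then collect all remaining terms, which are bounded by $C\sigma_2^4$ uniformly in $\bu_t$ by compactness, into $O(\sigma_2^4)$. Assembling the diagonal contributions then yields the stated decomposition.
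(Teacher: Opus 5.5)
You follow the paper's route: expand the square, drop the $\bm{\epsilon}_1$ cross terms by independence, and Taylor-expand $g$ in $\bm{\epsilon}_2$. You carry the expansion one order further than the paper does, and the $O(\sigma_2^2)$ term you isolate is real. The paper writes $g(\bu_t+\bm{\epsilon}_2)=g(\bu_t)+Dg(\bu_t)\bm{\epsilon}_2+r$ with $r=O(\|\bm{\epsilon}_2\|^2)$. It then asserts $\mathbb{E}\|\Phi(\bu_t)-g(\bu_t+\bm{\epsilon}_2)\|^2=\mathbb{E}\|\Phi(\bu_t)-g(\bu_t)-Dg(\bu_t)\bm{\epsilon}_2\|^2+O(\sigma_2^4)$. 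That step silently treats the cross term $-2\,\mathbb{E}\langle \Phi-g,\,r\rangle$ as fourth order, whereas it equals $-\sigma_2^2\,\mathbb{E}_{\mu}\langle \Phi-g,\Delta g\rangle+O(\sigma_2^4)$, exactly as you computed. So the gap is in the paper's proof, not in your reasoning.

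As a consequence, the displayed identity is false without an extra hypothesis. Take $d=1$, $\mathcal{U}=[-1,1]$, $\mu=\delta_0$, $\Phi\equiv 0$ and $g(u)=u^2-\tfrac12$. Then $\mathcal{L}_{\mathrm{MSE}}^{\mathrm{noisy}}=\tfrac14+\sigma_1^2-\sigma_2^2+3\sigma_2^4$. The right-hand side predicts $\tfrac14+\sigma_1^2+O(\sigma_2^4)$, since $Dg(0)=0$. No argument can prove the statement verbatim, and you should not present option (a) as doing so. Option (a) proves a different statement, with remainder $O\big(\sigma_2^2\,\|\Phi-g\|_{L^2(\mu)}+\sigma_2^4\big)$, which is informative only near a minimizer. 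Options (b) and (c) are honest corrections. A cleaner exact version is the conditional bias--variance identity
\[
\mathcal{L}_{\mathrm{MSE}}^{\mathrm{noisy}}=\mathbb{E}_{\mu}\big\|\Phi-g_{\sigma_2}\big\|^2+d\sigma_1^2+\mathbb{E}_{\mu}\,\mathrm{tr}\,\mathrm{Cov}_{\bm{\epsilon}_2}\big[g(\bu_t+\bm{\epsilon}_2)\big],\qquad g_{\sigma_2}:=\mathbb{E}_{\bm{\epsilon}_2}\big[g(\cdot+\bm{\epsilon}_2)\big],
\]
which needs no Taylor expansion at all. Its covariance term is $\sigma_2^2\|Dg\|_F^2+O(\sigma_2^4)$. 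Since $g_{\sigma_2}=g+\tfrac{\sigma_2^2}{2}\Delta g+O(\sigma_2^4)$, your extra term is exactly the first-order effect of replacing $g$ by its Gaussian smoothing inside the clean loss. This is the precise sense in which it can be ``folded into the clean-loss term.''

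Two smaller points about your own bookkeeping. First, with $g\in C^3$ the term $-2\,\mathbb{E}\langle e,R_3\rangle$ is in general only $o(\sigma_2^3)$, not $O(\sigma_2^4)$. You need $C^4$ (or a Lipschitz third derivative), so that the cubic Taylor term has zero mean by symmetry and the remainder is quartic. Second, derivative bounds on a neighbourhood of $\mathcal{U}$ are not enough, because $\bu_t+\bm{\epsilon}_2$ is not confined to any neighbourhood. You need $g$ defined on all of $\mathbb{R}^d$ with globally bounded (or polynomially growing) derivatives, so that the remainders are controlled by Gaussian moments.
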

\begin{remark} \label{rem:jacobian_growth}
    For chaotic systems, the Jacobian norm $\|Dg(\bu_t)\|_F$ grows exponentially under iteration.
\end{remark}

\begin{corollary}[MSE blow-up for chaotic systems]
\label{cor:mse_blowup}
Let $\lambda_\text{max} > 0$ denote the maximal Lyapunov exponent of the emulator dynamics $g$. For $k$-step rollouts with noisy initial conditions, the MSE satisfies
\begin{equation}
\label{eq:mse-blowup}
    \mathcal{L}_{\mathrm{MSE}}^{(k),\mathrm{noisy}} 
    \gtrsim d\sigma_1^2 + \sigma_2^2 \cdot d \cdot e^{2\lambda_\text{max} k}
    \quad \text{as } k \to \infty.
\end{equation}
In particular, even for a perfect emulator ($g = \Phi$), the noisy MSE diverges exponentially in the rollout horizon $k$.
\end{corollary}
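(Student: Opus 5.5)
The plan is to apply Proposition~\ref{prop:mse_noise} to the $k$-fold maps in place of the one-step maps. The noisy $k$-step loss is
\begin{equation*}
\mathcal{L}_{\mathrm{MSE}}^{(k),\mathrm{noisy}} := \mathbb{E}\big[\|\Phi^{\circ k}(\bu_t)+\bm{\epsilon}_1 - g^{\circ k}(\bu_t+\bm{\epsilon}_2)\|^2\big].
\end{equation*}
Replacing $\Phi$ by $\Phi^{\circ k}$ and $g$ by $g^{\circ k}$ in Proposition~\ref{prop:mse_noise}, with the same Taylor expansion and independence argument, gives
\begin{equation*}
\mathcal{L}_{\mathrm{MSE}}^{(k),\mathrm{noisy}} = \mathcal{L}_{\mathrm{MSE}}^{(k),\mathrm{clean}} + d\sigma_1^2 + \sigma_2^2\,\mathbb{E}_\mu\big[\|Dg^{\circ k}(\bu_t)\|_F^2\big] + O(\sigma_2^4).
\end{equation*}
The expansion step uses three facts: $\bm{\epsilon}_1$ is independent and has zero mean, so its cross terms vanish; the first-order cross term in $\bm{\epsilon}_2$ vanishes because $\mathbb{E}\bm{\epsilon}_2=0$; and $\mathbb{E}\|Dg^{\circ k}\bm{\epsilon}_2\|^2=\sigma_2^2\|Dg^{\circ k}\|_F^2$. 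The clean term is nonnegative, so we drop it and obtain a lower bound.

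The next step is to bound $\|Dg^{\circ k}(\bu)\|_F^2$ from below. By the chain rule, $Dg^{\circ k}(\bu)=\prod_{j=k-1}^{0}Dg(g^{\circ j}(\bu))$, which is the cocycle whose growth defines the Lyapunov spectrum $\lambda_1=\lambda_{\max}\ge\lambda_2\ge\dots\ge\lambda_d$. By the Oseledets multiplicative ergodic theorem, for $\mu$-a.e.\ $\bu$ the singular values satisfy $s_i(Dg^{\circ k}(\bu))\approx e^{\lambda_i k}$. Since $\|\cdot\|_F^2=\sum_i s_i^2$, we get $\|Dg^{\circ k}\|_F^2\ge s_1^2\approx e^{2\lambda_{\max}k}$. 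Taking expectations over $\mu$ and reading $\gtrsim$ as asymptotic on the exponential scale gives the claimed growth, in the sense that $\tfrac1k\log$ of the noise term is eventually at least $2\lambda_{\max}-o(1)$.

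The factor $d$ needs separate treatment. It is not literally implied by $s_1^2$ alone, so I would handle it in one of two ways. The first option is to interpret the bound heuristically, following Remark~\ref{rem:jacobian_growth}: a generic isotropic perturbation has all $d$ components amplified at rate $e^{\lambda_{\max}k}$, which is an upper-scaling statement. The second option is to absorb $d$ into the implicit constant of $\gtrsim$, since $d$ is fixed as $k\to\infty$. I would state the second explicitly so the corollary is rigorous up to constants. The final claim then follows by setting $g=\Phi$: this kills the clean term, but $\lambda_{\max}>0$ for the chaotic $\Phi$, so the lower bound still diverges exponentially in $k$.

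The main obstacle is making the $O(\sigma_2^4)$ remainder uniform in $k$. Higher derivatives of $g^{\circ k}$ also grow exponentially, so the expansion is only valid while $\sigma_2 e^{\lambda_{\max}k}$ stays small, that is, before the perturbation saturates at the size of the attractor. On a compact $\mathcal{U}$ the MSE is in fact bounded, so the literal statement as $k\to\infty$ with $\sigma_2$ fixed would fail. I would therefore make precise that the bound holds in the regime $k\lesssim \lambda_{\max}^{-1}\log(1/\sigma_2)$, equivalently in the limit $\sigma_2\to0$ first and then $k\to\infty$. In that regime the MSE grows until the error reaches the diameter of the attractor, which is the intended message.
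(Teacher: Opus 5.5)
Your proposal follows the paper's proof: apply Proposition~\ref{prop:mse_noise} to $\Phi^{\circ k}$ and $g^{\circ k}$, expand $D(g^{\circ k})$ by the chain rule, and lower-bound $\|D(g^{\circ k})\|_F^2$ by the squared leading singular value, which grows like $e^{2\lambda_{\max}k}$. Your caveats are correct and go beyond what the paper's proof addresses: the paper passes from $\|D(g^{\circ k})\|_{\mathrm{op}}\sim e^{\lambda_{\max}k}$ straight to $\|D(g^{\circ k})\|_F^2\gtrsim d\,e^{2\lambda_{\max}k}$, but $\|A\|_F^2\ge\|A\|_{\mathrm{op}}^2$ justifies the factor $d$ only when it is absorbed into the implicit constant; it also ignores that the $O(\sigma_2^4)$ remainder is non-uniform in $k$, so on compact $\mathcal{U}$ the result really concerns the $\sigma_2^2$ coefficient, i.e.\ the pre-saturation regime; and it overlooks that applying Oseledets with respect to $\mu$ requires $\mu$ to be $g$-invariant, which holds automatically only when $g=\Phi$.
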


\begin{remark}
Corollary~\ref{cor:mse_blowup} reveals a fundamental limitation of trajectory-based losses for chaotic systems: the MSE objective becomes dominated by initial condition noise at long horizons, regardless of emulator quality. This motivates the use of distributional objectives that are insensitive to trajectory-level divergence.
\end{remark}

\subsubsection{Wasserstein under noise}

We now analyze the Wasserstein component of the objective~\eqref{eq:genLoss} under noise. Throughout this section, we work with the $p$-Wasserstein distance
\begin{equation}
    W_p(\mu, \nu) := \mathcal{W}_{{d_\mathcal{U}}^p}(\mu, \nu)^{1/p},
\end{equation}
which satisfies the triangle inequality, rather than the cost $\mathcal{W}_{{d_\mathcal{U}}^p}$ directly.

\begin{proposition}[Wasserstein noise bounds]
\label{prop:wass_noise}
Let $\bm{\epsilon} \sim \mathcal{N}(0, \sigma^2 I_d)$ and define the Gaussian moment constant
\begin{equation}
    \kappa_{p,d} := \mathbb{E}\big[\|Z\|^p\big]^{1/p}, \quad Z \sim \mathcal{N}(0, I_d),
\end{equation}
which satisfies $\kappa_{p,d} = \Theta(\sqrt{d})$ for fixed $p$ as $d \to \infty$. For any $p \in [1, \infty)$:
\begin{enumerate}[(i)]
    \item \emph{Measurement noise:} $W_p(\mu, \mu^{\mathrm{obs}}) \leq \sigma_1 \kappa_{p,d}$.
    \item \emph{Initial condition noise:} $W_p(\mu, \tilde{\mu}) \leq \sigma_2 \kappa_{p,d}$.
    % \item \emph{Lipschitz propagation:} If $g$ is $L_g$-Lipschitz, then 
    % \[
    %     W_p(g_\#\mu, g_\#\tilde{\mu}) \leq L_g \cdot \sigma_2 \kappa_{p,d}.
    % \]
\end{enumerate}
\end{proposition}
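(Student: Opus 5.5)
The plan is to bound each Wasserstein distance by exhibiting an explicit coupling. Since $W_p$ is defined as an infimum over couplings, any admissible coupling gives an upper bound. For part (i), let $\bu \sim \mu$ and let $\bm{\epsilon}_1 \sim \mathcal{N}_{\sigma_1}$ be independent of $\bu$. The pair $(\bu, \bu + \bm{\epsilon}_1)$ has first marginal $\mu$. Its second marginal is the law of a sum of independent variables, which is exactly the convolution $\mu * \mathcal{N}_{\sigma_1} = \mu^{\mathrm{obs}}$. So the joint law $\pi$ of this pair lies in $\Pi(\mu, \mu^{\mathrm{obs}})$, and with $d_\mathcal{U}$ the Euclidean metric,
\begin{equation*}
W_p(\mu,\mu^{\mathrm{obs}})^p \le \int \|x-y\|^p\,\text{d}\pi(x,y) = \mathbb{E}\big[\|\bm{\epsilon}_1\|^p\big] = \sigma_1^p\,\mathbb{E}\big[\|Z\|^p\big] = \sigma_1^p \kappa_{p,d}^p .
\end{equation*}
Here we used $\bm{\epsilon}_1 \overset{d}{=} \sigma_1 Z$ with $Z\sim\mathcal{N}(0,I_d)$. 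Taking $p$-th roots gives (i). Part (ii) is the same argument with $\bm{\epsilon}_2$ and $\sigma_2$, since $\tilde\mu = \mu * \mathcal{N}_{\sigma_2}$.

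One technical point needs care. The paper set $\mathcal{U}$ up as a compact metric space, but convolution with a Gaussian leaves any compact set. So I would read these statements in the ambient space $\mathbb{R}^d$ with the Euclidean metric, viewing $\mu$ as a measure on $\mathbb{R}^d$ supported on $\mathcal{U}$. The Kantorovich problem is then posed on $\mathbb{R}^d$. The two measures have finite $p$-th moments, since $\mu$ has compact support and Gaussians have all moments, so $W_p$ is finite and well defined. I would state this convention at the start of the proof.

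It remains to justify $\kappa_{p,d} = \Theta(\sqrt d)$ for fixed $p$. For the upper bound there are two cases. When $p \le 2$, Jensen (monotonicity of $L^p$ norms) gives $\kappa_{p,d} \le \kappa_{2,d} = \sqrt{d}$. When $p > 2$, use $\|Z\|^2 \sim \chi^2_d$ and the explicit formula
\begin{equation*}
\mathbb{E}\|Z\|^p = 2^{p/2}\,\frac{\Gamma\big((d+p)/2\big)}{\Gamma(d/2)},
\end{equation*}
which behaves like $d^{p/2}$ by Stirling or Gautschi-type gamma ratio bounds. For the lower bound there are again two cases. When $p \ge 1$ and $p \ge 2$, monotonicity gives $\kappa_{p,d} \ge \kappa_{2,d} = \sqrt{d}$ directly. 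When $1 \le p < 2$, monotonicity gives $\kappa_{p,d} \ge \kappa_{1,d} = \sqrt{2}\,\Gamma((d+1)/2)/\Gamma(d/2)$, and this is $\sim \sqrt{d}$. Alternatively, the gamma-ratio asymptotic covers all $p$ at once: $\kappa_{p,d}/\sqrt{d} \to 1$.

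None of these steps is a real obstacle. The coupling argument is immediate, and the only places needing care are the ambient-space convention for the convolved measures and citing the gamma-function ratio asymptotics for the $\Theta(\sqrt d)$ claim.
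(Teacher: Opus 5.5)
Your proof is correct and matches the paper's: the paper also bounds $W_p^p$ by the cost of the independent-noise coupling, i.e.\ the joint law of $(\bu,\bu+\bm{\epsilon})$, giving $W_p^p \le \mathbb{E}\|\bm{\epsilon}\|^p = \sigma^p\kappa_{p,d}^p$. Your ambient-space convention for the convolved measures and your gamma-ratio justification of $\kappa_{p,d}=\Theta(\sqrt{d})$ are sound additions that the paper leaves implicit.
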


% \begin{remark}[Gaussian moment scaling]
% The constant $\kappa_{p,d}$ admits the following bounds:
% \begin{itemize}
%     \item $\kappa_{2,d} = \sqrt{d}$ (exact),
%     \item $\kappa_{p,d} \leq \sqrt{d + p}$ for all $p \geq 1$,
%     \item $\kappa_{p,d} \sim \sqrt{d}$ as $d \to \infty$ for fixed $p$.
% \end{itemize}
% Thus, the measurement and IC noise contributions scale as $O(\sigma\sqrt{d})$ in all cases.
% \end{remark}

% The naive application of Proposition~\ref{prop:wass-noise}(iii) to $k$-step rollouts suggests that IC noise propagates with factor $L_g^k$, yielding exponential growth for chaotic systems. However, this bound ignores the crucial property of \emph{ergodic mixing}, which causes initial condition information to be forgotten in the distributional sense.

\begin{definition}[Exponential mixing]
\label{def:exp_mixing}
An emulator $g: \mathcal{U} \to \mathcal{U}$ with invariant measure $\nu$ is \emph{exponentially mixing in $W_p$} with rate $\rho \in (0,1)$ if there exists $C > 0$ such that for any probability measure $\eta$ on $\mathcal{U}$ with finite $p$-th moment,
\begin{equation}
    W_p\big((g^{\circ k})_\# \eta,\, \nu\big) \leq C \rho^k \quad \text{for all } k \in \mathbb{N}.
\end{equation}
\end{definition}

\begin{remark}
On the compact metric space $(\mathcal{U}, d_\mathcal{U})$, exponential mixing in $W_p$ for any $p \in [1, \infty)$ implies exponential mixing in $W_q$ for all $q \in [1, \infty)$, with rate depending on $p$, $q$, and $\mathrm{diam}(\mathcal{U})$. Thus the choice of $p$ in Definition~\ref{def:exp_mixing} is largely a matter of convenience.
\end{remark}

\begin{theorem}[Wasserstein noise forgetting]
\label{thm:wass_forgetting}
Suppose $g$ is exponentially mixing in $W_p$ with invariant measure $\nu$, rate $\rho \in (0,1)$, and constant $C > 0$. Let $\hat{\mu}_k^{\mathrm{noisy}} := (g^{\circ k})_\# \tilde{\mu}$ denote the $k$-step noisy emulator measure. Then
\begin{align}
    \label{eq:wass_forgetting}
 W_p\big(& \mu^{\mathrm{obs}},\, \hat{\mu}_k^{\mathrm{noisy}}\big) \leq \notag \\&\underbrace{\sigma_1 \kappa_{p,d}}_{\textup{measurement noise}} 
    + \underbrace{W_p(\mu, \nu)}_{\textup{model error}} 
    + \underbrace{C\rho^k}_{\textup{IC noise (vanishing)}}.
\end{align}

In particular,
\begin{equation}
\label{eq:wass_limit}
    \lim_{k \to \infty} W_p\big(\mu^{\mathrm{obs}},\, \hat{\mu}_k^{\mathrm{noisy}}\big) 
    \leq \sigma_1 \kappa_{p,d} + W_p(\mu, \nu).
\end{equation}
\end{theorem}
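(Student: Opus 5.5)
The plan is a two-step triangle inequality for $W_p$, feeding in Proposition~\ref{prop:wass_noise} and the exponential mixing assumption of Definition~\ref{def:exp_mixing}. Insert the clean invariant measure $\mu$ and the emulator's invariant measure $\nu$ between $\mu^{\mathrm{obs}}$ and $\hat{\mu}_k^{\mathrm{noisy}}$:
\begin{equation*}
W_p\big(\mu^{\mathrm{obs}}, \hat{\mu}_k^{\mathrm{noisy}}\big) \le W_p\big(\mu^{\mathrm{obs}}, \mu\big) + W_p(\mu,\nu) + W_p\big(\nu, (g^{\circ k})_\#\tilde{\mu}\big).
\end{equation*}
This is legitimate because $W_p = \mathcal{W}_{{d_\mathcal{U}}^p}^{1/p}$ is a metric on measures with finite $p$-th moment, which is why the section works with $W_p$ rather than the cost.

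We bound the three terms in order.

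\textbf{First term.} Proposition~\ref{prop:wass_noise}(i) gives $W_p(\mu,\mu^{\mathrm{obs}}) \le \sigma_1\kappa_{p,d}$. The underlying reason is the coupling $(\bu, \bu+\bm{\epsilon}_1)$, which has cost $\mathbb{E}\|\bm{\epsilon}_1\|^p = \sigma_1^p\kappa_{p,d}^p$.

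\textbf{Second term.} $W_p(\mu,\nu)$ is left as is; it is the model error.

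\textbf{Third term.} Apply Definition~\ref{def:exp_mixing} with $\eta = \tilde{\mu}$, which gives $W_p\big((g^{\circ k})_\#\tilde\mu, \nu\big) \le C\rho^k$. The limit statement \eqref{eq:wass_limit} then follows by letting $k\to\infty$, since $\rho\in(0,1)$ makes $C\rho^k \to 0$. It is read as a $\limsup$ if the limit is not known to exist.

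\textbf{Main obstacle.} The delicate step is the domain bookkeeping. The noisy measures $\mu^{\mathrm{obs}}$ and $\tilde\mu = \mu * \mathcal{N}_{\sigma_2}$ live on $\mathbb{R}^d$, not on the compact $\mathcal{U}$, while $g$ and the mixing hypothesis are stated on $\mathcal{U}$. I would take $\mathcal{U}\subseteq\mathbb{R}^d$ with $d_\mathcal{U}$ Euclidean, so that all measures sit in a common metric space and the triangle inequality applies. I would also interpret $g$ as defined on the support of $\tilde\mu$ (or extended measurably), so that $g_\#\tilde\mu$, and hence $(g^{\circ k})_\#\tilde\mu$ with $g^{\circ k}$ landing in $\mathcal{U}$ after one step, makes sense.

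The mixing hypothesis needs $\eta$ to have finite $p$-th moment. For $\tilde\mu$ this holds because $\mu$ has compact support and the Gaussian has all moments. If one insists that $\eta$ be supported on $\mathcal{U}$, apply mixing instead to $\eta' = g_\#\tilde\mu$, which is supported on $\mathcal{U}$. This gives $C\rho^{k-1}$, absorbed into the constant; I expect to state this as a minor remark rather than change the bound.
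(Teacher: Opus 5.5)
Your proposal is correct and follows the paper's proof exactly. It uses the same three-term triangle inequality through $\mu$ and $\nu$, bounds the first term by Proposition~\ref{prop:wass_noise}(i), and bounds the third by applying exponential mixing to $\tilde\mu$. Your domain-bookkeeping remarks go beyond the paper, which silently applies the mixing hypothesis to $\tilde\mu$ on $\mathbb{R}^d$. Note, though, that your fallback of applying mixing to $g_\#\tilde\mu$ yields $(C/\rho)\rho^k$, so it does change the stated constant, although the limit statement is unaffected.
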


\begin{remark}[Physical interpretation]
Theorem~\ref{thm:wass_forgetting} captures a fundamental property of chaotic dynamics: initial condition information is forgotten in the distributional sense. While individual trajectories diverge exponentially (causing MSE blow-up per Corollary~\ref{cor:mse_blowup}), the statistical ensemble converges to the invariant measure regardless of initial perturbations. The Wasserstein distance inherits this insensitivity, making it well-suited for long-horizon evaluation of chaotic emulators.
\end{remark}

The following corollary extends the noise forgetting result to the adversarial summary space objective in~\eqref{eq:genLoss}.

\begin{corollary}[summary space Wasserstein noise forgetting]
\label{cor:summary_wass_forgetting}
Under the assumptions of Theorem~\ref{thm:wass_forgetting}, let $f: \mathcal{U} \to \mathcal{S}$ be an $L_f$-Lipschitz summary map. Then
\begin{equation*}
W_p\big(f_\#\mu^{\mathrm{obs}},\, f_\#\hat{\mu}_k^{\mathrm{noisy}}\big) 
    \leq L_f \Big( \sigma_1 \kappa_{p,d} + W_p(\mu, \nu) + C\rho^k \Big),
\end{equation*}
and therefore,
\begin{equation*}
    \lim_{k \to \infty} W_p\big(f_\#\mu^{\mathrm{obs}},\, f_\#\hat{\mu}_k^{\mathrm{noisy}}\big) 
    \leq L_f \Big( \sigma_1 \kappa_{p,d} + W_p(\mu, \nu) \Big),
\end{equation*}
\end{corollary}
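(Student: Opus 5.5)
The plan is to combine Proposition~\ref{prop:general_bound} with Theorem~\ref{thm:wass_forgetting}. The summary-space statement then follows by taking $p$-th roots and substituting the state-space bound.

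\textbf{Step 1: Lipschitz contraction.} Apply Proposition~\ref{prop:general_bound} to the pair $(\mu^{\mathrm{obs}}, \hat{\mu}_k^{\mathrm{noisy}})$ in place of $(\mu,\hat\mu)$. Its proof uses nothing specific to $\hat\mu = g_\#\mu$, so it holds for any two probability measures. The argument is short. Given any coupling $\pi \in \Pi(\mu^{\mathrm{obs}}, \hat{\mu}_k^{\mathrm{noisy}})$, the measure $(f\times f)_\#\pi$ is a coupling of $f_\#\mu^{\mathrm{obs}}$ and $f_\#\hat{\mu}_k^{\mathrm{noisy}}$. Moreover, $d_\mathcal{S}(f(x),f(y))^p \le L_f^p\, d_\mathcal{U}(x,y)^p$. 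Integrating and taking the infimum over $\pi$ gives
\begin{equation*}
\mathcal{W}_{d_\mathcal{S}^p}\big(f_\#\mu^{\mathrm{obs}}, f_\#\hat{\mu}_k^{\mathrm{noisy}}\big) \le L_f^p\, \mathcal{W}_{d_\mathcal{U}^p}\big(\mu^{\mathrm{obs}}, \hat{\mu}_k^{\mathrm{noisy}}\big).
\end{equation*}
Taking $p$-th roots, and reading $W_p$ on $\mathcal{S}$ as $\mathcal{W}_{d_\mathcal{S}^p}^{1/p}$, yields $W_p(f_\#\mu^{\mathrm{obs}}, f_\#\hat{\mu}_k^{\mathrm{noisy}}) \le L_f\, W_p(\mu^{\mathrm{obs}}, \hat{\mu}_k^{\mathrm{noisy}})$.

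\textbf{Step 2: substitute the state-space bound.} Insert \eqref{eq:wass_forgetting} from Theorem~\ref{thm:wass_forgetting} into the right-hand side. This gives the first displayed inequality directly.

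\textbf{Step 3: pass to the limit.} Since $\rho\in(0,1)$, the term $C\rho^k \to 0$. If the limit on the left exists, the second inequality follows immediately. Otherwise, the same argument gives the bound for the $\limsup$, consistent with how \eqref{eq:wass_limit} is stated.

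\textbf{Main obstacle.} The only real care is a domain issue. The noisy measures $\mu^{\mathrm{obs}}$ and $\tilde\mu$ are Gaussian convolutions, so they live on $\mathbb{R}^d$ rather than on the compact set $\mathcal{U}$. Consequently $f$ must be defined, and $L_f$-Lipschitz, on a neighbourhood containing their supports. I would handle this in one of two ways. One option is to assume $f$ is defined on $\mathbb{R}^d \supseteq \mathcal{U}$ with the same Lipschitz constant, which is what neural summary maps satisfy in practice. The other is to use a McShane/Kirszbraun extension, which preserves $L_f$ for real-valued or Euclidean-valued $f$. Finiteness of $p$-th moments is automatic, since the measures are compactly supported measures convolved with Gaussians. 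With that settled, the rest is the two-line composition above.
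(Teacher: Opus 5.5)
Your proposal is correct and matches the paper's proof, which likewise applies the Lipschitz contraction $W_p(f_\#\alpha, f_\#\beta)\le L_f\,W_p(\alpha,\beta)$ (the $p$-th root of Proposition~\ref{prop:general_bound}, valid for any pair of measures) to the bound \eqref{eq:wass_forgetting} of Theorem~\ref{thm:wass_forgetting}. Your two extra remarks are sound points of rigor that the paper's two-line proof leaves implicit: $f$ must be $L_f$-Lipschitz on a domain containing the support of the Gaussian-convolved $\mu^{\mathrm{obs}}$, and if the limit does not exist one only gets the bound for the $\limsup$.
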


\subsubsection{Comparison and implications}

% Table~\ref{tab:noise-comparison} summarizes the contrasting behavior of MSE and Wasserstein objectives under noise for chaotic systems.

% \begin{table}[h]
% \centering
% \renewcommand{\arraystretch}{1.4}
% \begin{tabular}{l|c|c}
% \toprule
% \textbf{Aspect} & \textbf{MSE} & \textbf{Wasserstein $W_p$} \\
% \midrule
% Measurement noise ($\sigma_1$) & $+\,d\sigma_1^2$ & $+\,O(\sigma_1 \sqrt{d})$ \\
% IC noise, short horizon & $+\,\sigma_2^2 \|D(g^{\circ k})\|_F^2$ & $+\,O(L_g^k \sigma_2 \sqrt{d})$ \\
% IC noise, long horizon & $\sim e^{2\lambda k}$ (diverges) & $\to 0$ (forgotten) \\
% Perfect emulator ($g = \Phi$) & Still diverges & Converges to $O(\sigma_1\sqrt{d})$ \\
% \midrule
% Measures & Trajectory fidelity & Invariant measure preservation \\
% \bottomrule
% \end{tabular}
% \caption{Comparison of MSE and Wasserstein objectives under noise for chaotic systems with maximal Lyapunov exponent $\lambda > 0$ and mixing rate $\rho < 1$.}
% \label{tab:noise-comparison}
% \end{table}

The key observation is that the MSE loss becomes dominated by initial condition noise at long horizons, regardless of emulator quality, while the Wasserstein distance stabilizes to a finite value determined only by measurement noise and model error in preserving the invariant measure.

This analysis suggests a natural horizon-dependent training strategy. Define the mixing time $\tau_{\mathrm{mix}} := -1/\log\rho$, and consider the following scenarios.
\vspace{-0.5em}
\begin{itemize}[leftmargin=*]
    \item \emph{Short horizon} ($k \ll \tau_{\mathrm{mix}}$): Both MSE and Wasserstein objectives are sensitive to IC noise, but MSE provides meaningful trajectory-level gradients.
    \item \emph{Long horizon} ($k \gg \tau_{\mathrm{mix}}$): The MSE becomes noise-dominated and uninformative. Only the Wasserstein term provides a useful signal toward the invariant measure.
\end{itemize}
\vspace{-0.3em}

This motivates the combined objective with horizon-dependent weighting:
\begin{equation}
\label{eq:combined_objective}
    \mathcal{L}(g, f) := \sum_{k=1}^{K} w_k \, \mathcal{L}_{\mathrm{MSE}}^{(k)} 
    + \lambda \, \mathcal{W}_{d_\mathcal{S}^p}(f_\# \mu,\, f_\# \hat{\mu}),
\end{equation}
where setting $w_k \to 0$ for $k \gg \tau_{\mathrm{mix}}$ down-weights noise-dominated MSE terms, while the Wasserstein regularizer ensures long-term statistical consistency even when trajectory-level prediction is fundamentally impossible due to chaos. In practice, we conservatively choose $w_1 = 1$, $w_{k>1} = 0$.

\vspace{0.4em}

\begin{remark}[Scope of noise model]
\label{rem:noise_scope}
The additive isotropic Gaussian noise model is adopted for analytical tractability. The two key phenomena, exponential MSE blow-up (Corollary~\ref{cor:mse_blowup}) and Wasserstein noise forgetting (Theorem~\ref{thm:wass_forgetting}), are not artifacts of this choice. MSE blow-up follows from the exponential amplification of any initial condition perturbation through the system Jacobian, a defining property of chaotic dynamics (Remark~\ref{rem:jacobian_growth}). Wasserstein robustness follows from exponential mixing (Definition~\ref{def:exp_mixing}), which renders the precise form of the initial perturbation irrelevant over long horizons. Measurement noise of a different distributional form would similarly perturb the observed measure without disrupting the attractor geometry. Alternative noise models would affect the explicit constants in the bounds derived below, but not the qualitative behavior.
\end{remark}

\section{Numerical Experiments} \label{sec:num_exps}
We evaluate our adversarial OT regularization framework on three benchmark chaotic systems
of increasing spatial complexity. Across all experiments, we compare our two
learnable OT regularizers, Sinkhorn and WGAN (Section~\ref{sec:ours}), against the an
MSE-only baseline (No OT) and the handcrafted OT baseline (Fixed OT) of \citet{jiang2023training}.
All methods share the same emulator architecture; they differ in the OT regularization
term and its associated training procedure. 

\paragraph{Dynamical systems.}The \emph{Lorenz-96} (L96) system with $60$ variables is a canonical high-dimensional chaotic model for atmospheric dynamics.
The \emph{Kuramoto-Sivashinsky} (KS) equation discretized on $256$ grid points, is a canonical one-dimensional chaotic PDE modeling reaction-diffusion instabilities and flame front propagation~\cite{HYMAN1986113}.
\emph{Kolmogorov flow} is a two-dimensional incompressible Navier-Stokes system
driven by a sinusoidal body force, widely used as a benchmark for turbulent
dynamics~\cite{Chandler_Kerswell_2013}. We additionally include \emph{Lorenz-63} (L63) as a low-dimensional test, where we
restrict the analysis to qualitative visualizations of attractor geometry. Implementation details for all four systems are deferred to Appendices~\ref{app:exp-details}, \ref{app:exp-details_ks}, \ref{app:exp-details_kolmog} and \ref{app:exp-details_l63}, respectively.

\paragraph{Training regimes.}
For L96, we consider two regimes. The \emph{multi-trajectory} regime follows
\citet{jiang2023training} exactly, using $2{,}000$ trajectories of length $2{,}000$ with
forcing $F^{(n)}\sim\mathrm{Unif}[10,18]$, allowing a direct comparison against their
reported numbers under identical data and architecture conditions. The
\emph{single-trajectory} regime uses a single trajectory of the same length at fixed
$F=10$, and is the setting used for all KS and Kolmogorov experiments, reflecting the
scenario where only one observed run is available.
The data is corrupted with additive Gaussian noise with standard deviation $\sigma \cdot \mathrm{std}$, where the scaling factor $\sigma$ is the noise level and $\mathrm{std}$ denotes the empirical standard deviation of the
corresponding clean trajectory.

\begin{table}[t]
\caption{Summary statistics used to compute the $L^1$ histogram error at evaluation
time.}
\label{tab:histogram_stats}
\vskip -0.1in
\centering
\setlength{\tabcolsep}{6pt}
\begin{tabular}{ll}
\toprule
\textbf{System} & \textbf{Fixed Summary Map $S(\bu)$} \\
\midrule
L96
  & $\bigl\{\partial_t{\bu}_i,\;(\bu_{i+1}-\bu_{i-2})\bu_{i-1},\;\bu_i\bigr\}$ \\[4pt]
KS
  & $\bigl\{\partial_t\bu,\;\partial_x\bu,\;
    \partial_{xx}\bu\bigr\}$ \\[4pt]
Kolmogorov Flow
  & $\bigl\{\mathbf{v}\cdot\nabla\omega,\;
\partial_t\omega,\;\omega\bigr\}$ \\
\bottomrule
\end{tabular}
\vskip -0.1in
\end{table}

\paragraph{Evaluation protocol.} \label{par:eval_protocol}
Models trained on noisy data ($\sigma=0.3$) are evaluated in two settings:
\emph{clean rollouts} ($\sigma = 0$), consisting of autoregressive predictions initialized from noisy
states and rolled out on $200$ noise-free test trajectories for $1{,}500$ steps (L96)
or $1{,}000$ steps (KS, Kolmogorov); and \emph{noisy rollouts} ($\sigma = 0.3$), evaluating on
trajectories corrupted at the same noise level as training.
We report three metrics: one-step \textbf{RMSE} (short-term accuracy),
\textbf{spectral distance} (energy spectrum fidelity), and $L^1$ \textbf{histogram error} (invariant measure alignment). We denote by $\mathbf{u}$ the primary state variable of each system:  $\mathbf{u}$ is the velocity field for L96 and KS, and $\mathbf{u} = \omega =  (\nabla\times\mathbf{v}) \cdot \hat{z}$ is the vorticity for Kolmogorov flow. The histogram error is always computed using fixed and system-specific summary statistics given by
$S(\bu)$ listed in Table~\ref{tab:histogram_stats}, regardless of the summary
representation used during training, ensuring a fair comparison across fixed and
learnable methods. 
Full metric definitions are given in Appendix~\ref{sec:metrics}.

\begin{remark}
For chaotic systems, the metrics of primary interest are the $L^1$ histogram error
and spectral distance, as they directly measure alignment with the invariant measure
of the attractor. RMSE captures only short-term trajectory accuracy and, as shown in
Section~\ref{subsec:noisy}, degrades exponentially with rollout horizon due to
sensitivity to initial conditions. We include it for completeness. 
\end{remark}

\paragraph{Architecture choices.}
\vspace{-0.5em}
For L96 and KS, the emulator is a Fourier Neural Operator (FNO;
\citealt{li2021fourier}), following \citet{jiang2023training}.
For Kolmogorov flow, we use a UNet encoder-decoder with spectral convolution layers,
following \citet{jiang2025hierarchical}.
The learnable summary map $f:\mathcal{U}\to\mathbb{R}^{\operatorname{dim}(\mathcal{U}) \times d}$ shares the same output structure as the fixed one $S$ (Table~\ref{tab:histogram_stats}), which correspond to the special case $d=3$ with handcrafted features. By default, $f$ is a three-layer MLP  with hidden width $128$ for all
systems, except where we additionally test a spatially aware variant: a 1D convolutional map for KS and a 2D convolutional map with circular padding for
Kolmogorov flow, both providing inductive biases appropriate to the periodic structure
of their respective domains.
\begin{figure}[t]
\vskip-0.1in
    \centering
\includegraphics[width=1\linewidth, trim=10 12 10 8, clip]{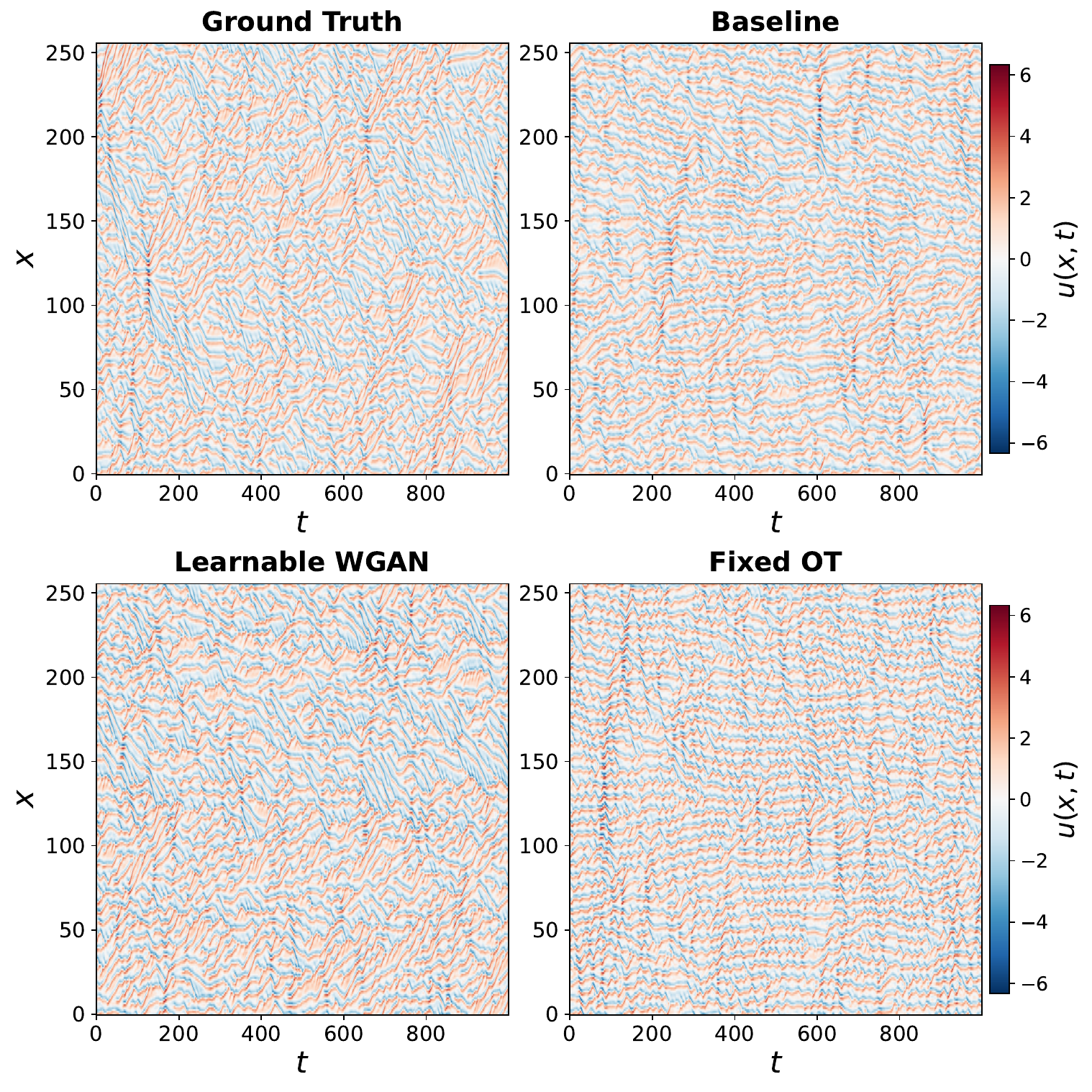}
    \caption{KS full roll-out evaluation (clean data). Our WGAN-style emulator most faithfully replicates the diagonal wave patterns and spatial structure of the ground truth (numerical simulation) across the full evaluation rollout.}
    \label{fig:ks-paper}
    \vskip-0.15in
\end{figure}

\begin{figure}[h!]
    \centering
    \includegraphics[width=1\linewidth, trim=40 260 40 35, clip]{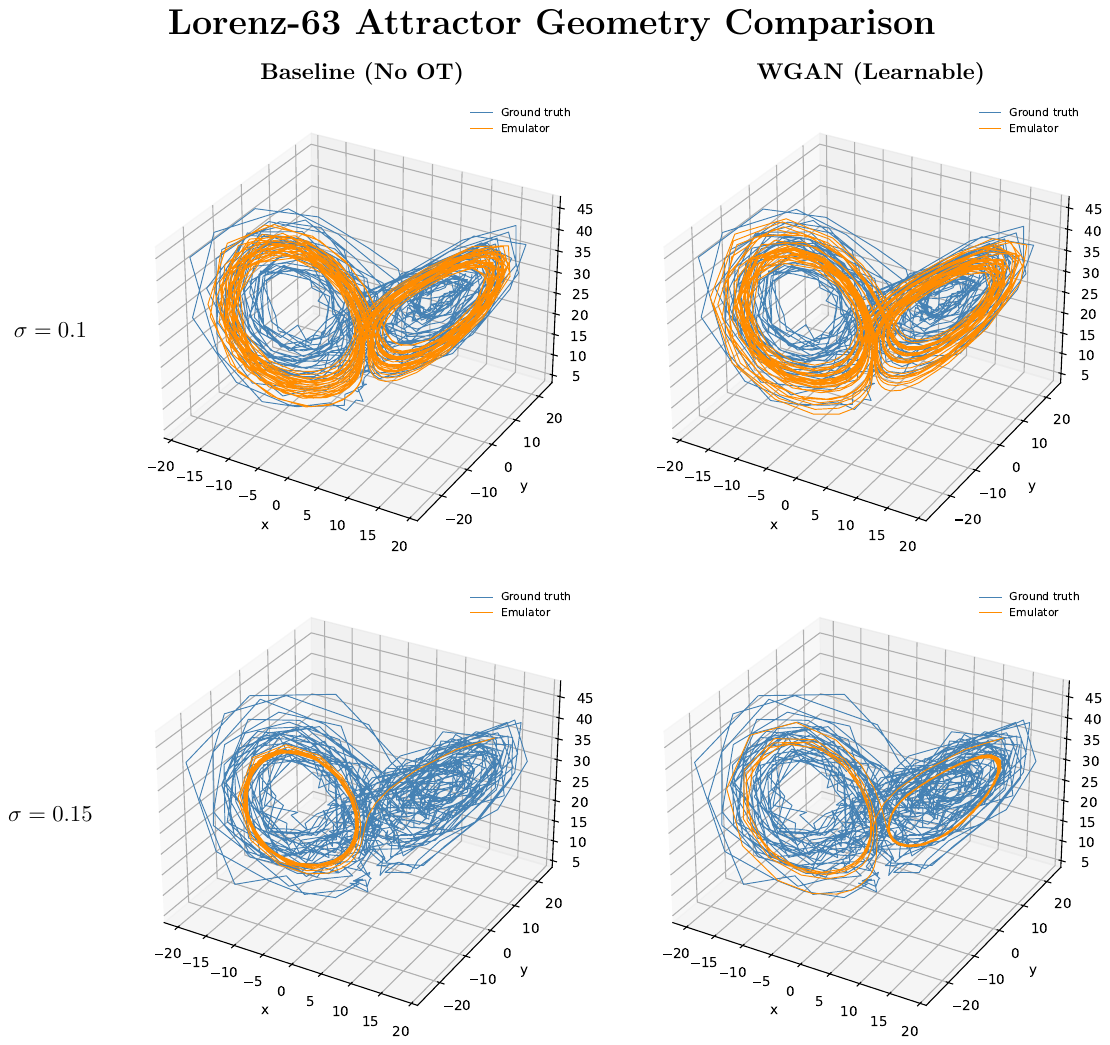}
    \caption{L63 emulator geometry at increasing noise level $\sigma$. At $\sigma=0.10$, the MSE baseline (No OT) underestimates the spatial extent of the attractor; at $\sigma=0.15$, it collapses to a limit cycle, losing the bilobal structure entirely. WGAN maintains coverage of both lobes at both noise levels, directly illustrating how distributional regularization prevents attractor collapse under noise. More results on L63 are provided in Appendix~\ref{app:l63}.}
    \label{fig:l63_attractor_comp}
    \vskip -0.2in
\end{figure}

\begin{table*}[h!]
\caption{Experimental results across all benchmark systems. Each system block reports
median performance under noisy ($\sigma\!=\!0.3$) and clean ($\sigma\!=\!0.0$) conditions.
% \textbf{HS} = handcrafted fixed summary statistics (Table~\ref{tab:histogram_stats}); \textbf{LS} = learnable summaries (ours).
Best result per metric per block is \textbf{bolded } and our methods are highlighted with a shaded background. Unless otherwise stated, $d=3$.}
\label{tab:unified_results}
\vskip -0.1in
\centering
\setlength{\tabcolsep}{5pt}
\begin{tabular}{llccc ccc}
\toprule
& & \multicolumn{3}{c}{\textbf{Noisy} ($\sigma = 0.3$)}
  & \multicolumn{3}{c}{\textbf{Clean} ($\sigma = 0.0$)} \\
\cmidrule(lr){3-5}\cmidrule(lr){6-8}
\textbf{System} & \textbf{Method}
  & $L^1$ Hist.\,$\downarrow$
  & Spec. dist.\,$\downarrow$
  & RMSE\,$\downarrow$
  & $L^1$ Hist.\,$\downarrow$
  & Spec. dist.\,$\downarrow$
  & RMSE\,$\downarrow$ \\

%% ================================================================
%% Lorenz-96 (multi-trajectory)
%% ================================================================
\midrule
\multirow{6}{*}{\shortstack[l]{L96\\\textit{(multi-traj)}}}
  & No OT \scriptsize{(baseline)}
    & 0.348 & 0.321 & \textbf{0.388} & 0.225 & 0.294 & \textbf{0.302} \\
  & Fixed OT
    & 0.175 & 0.141 & 0.398 & \textbf{0.055} & 0.119 & 0.313 \\[2pt]
  & \cellcolor{learnablegray}Sinkhorn \scriptsize{(ours)}
    & \cellcolor{learnablegray}0.175
    & \cellcolor{learnablegray}0.112
    & \cellcolor{learnablegray}0.406
    & \cellcolor{learnablegray}0.109
    & \cellcolor{learnablegray}0.083
    & \cellcolor{learnablegray}0.325 \\
  & \cellcolor{learnablegray}WGAN \scriptsize{(ours)}
    & \cellcolor{learnablegray}\textbf{0.149}
    & \cellcolor{learnablegray}\textbf{0.049}
    & \cellcolor{learnablegray}0.401
    & \cellcolor{learnablegray}0.083
    & \cellcolor{learnablegray}\textbf{0.074}
    & \cellcolor{learnablegray}0.318 \\
  & \cellcolor{learnablegray}WGAN ($d\!=\!1$) \scriptsize{(ours)}
    & \cellcolor{learnablegray}0.234
    & \cellcolor{learnablegray}0.205
    & \cellcolor{learnablegray}0.392
    & \cellcolor{learnablegray}0.091
    & \cellcolor{learnablegray}0.176
    & \cellcolor{learnablegray}0.306 \\

%% ================================================================
%% Lorenz-96 (single trajectory)
%% ================================================================
\midrule
\multirow{5}{*}{\shortstack[l]{L96\\\textit{(single-traj)}}}
  & No OT \scriptsize{(baseline)}
    & 0.298 & 0.307 & \textbf{0.362} & 0.187 & 0.321 & \textbf{0.271} \\
  & Fixed OT
    & 0.178 & 0.151 & 0.368 & \textbf{0.077} & 0.157 & 0.278\\[2pt]
  & \cellcolor{learnablegray}Sinkhorn \scriptsize{(ours)}
    & \cellcolor{learnablegray}0.222
    & \cellcolor{learnablegray}\textbf{0.137}
    & \cellcolor{learnablegray}0.367
    & \cellcolor{learnablegray}0.082
    & \cellcolor{learnablegray}\textbf{0.123}
    & \cellcolor{learnablegray}0.278 \\
  & \cellcolor{learnablegray}WGAN \scriptsize{(ours)}
    & \cellcolor{learnablegray}\textbf{0.151}
    & \cellcolor{learnablegray}0.145
    & \cellcolor{learnablegray}0.371
    & \cellcolor{learnablegray}0.115
    & \cellcolor{learnablegray}0.175
    & \cellcolor{learnablegray}0.283\\

%% ================================================================
%% Kuramoto-Sivashinsky
%% ================================================================
\midrule
\multirow{6}{*}{\shortstack[l]{KS\\\textit{(single-traj)}}}
  & No OT \scriptsize{(baseline)}
    & 0.454 & 0.351 & \textbf{0.370} & 0.241 & 0.342 & \textbf{0.243} \\
  & Fixed OT
    & 0.290 & 0.349 & 0.379 & 0.310 & 0.386 & 0.257 \\[2pt]
  & \cellcolor{learnablegray}Sinkhorn \scriptsize{(ours)}
    & \cellcolor{learnablegray}0.435
    & \cellcolor{learnablegray}0.219
    & \cellcolor{learnablegray}0.373
    & \cellcolor{learnablegray}0.190
    & \cellcolor{learnablegray}0.212
    & \cellcolor{learnablegray}0.246 \\
  & \cellcolor{learnablegray}WGAN \scriptsize{(ours)}
    & \cellcolor{learnablegray}0.339
    & \cellcolor{learnablegray}\textbf{0.148}
    & \cellcolor{learnablegray}0.375
    & \cellcolor{learnablegray}\textbf{0.153}
    & \cellcolor{learnablegray}\textbf{0.183}
    & \cellcolor{learnablegray}0.251 \\
  & \cellcolor{learnablegray}WGAN (1D-Conv) \scriptsize{(ours)}
    & \cellcolor{learnablegray}\textbf{0.259}
    & \cellcolor{learnablegray}0.198
    & \cellcolor{learnablegray}0.393
    & \cellcolor{learnablegray}0.245
    & \cellcolor{learnablegray}0.205
    & \cellcolor{learnablegray}0.278 \\

%% ================================================================
%% 2D Kolmogorov Flow
%% ================================================================
\midrule
\multirow{5}{*}{\shortstack[l]{Kolmogorov\\Flow \\\textit{(single-traj)}}}
  & No OT (\scriptsize{(baseline)}
    & 0.660 & 0.249 & 0.381 & 0.266 & 0.205 & \textbf{0.202} \\
  & Fixed OT
    & 0.614 & 0.168 & \textbf{0.381} & 0.239 & 0.193 & 0.203 \\[2pt]
  & \cellcolor{learnablegray}Sinkhorn (2D-Conv) \scriptsize{(ours)}
    & \cellcolor{learnablegray}0.360
    & \cellcolor{learnablegray}\textbf{0.144}
    & \cellcolor{learnablegray}0.405
    & \cellcolor{learnablegray}\textbf{0.178}
    & \cellcolor{learnablegray}\textbf{0.129}
    & \cellcolor{learnablegray}0.221 \\
  & \cellcolor{learnablegray}WGAN (2D-Conv) \scriptsize{(ours)}
    & \cellcolor{learnablegray}\textbf{0.273}
    & \cellcolor{learnablegray}0.157
    & \cellcolor{learnablegray}0.424
    & \cellcolor{learnablegray}0.190
    & \cellcolor{learnablegray}0.139
    & \cellcolor{learnablegray}0.225 \\
\bottomrule
\end{tabular}
\vskip -0.2in
\end{table*}

\section{Discussion}
Our approaches (Sinkhorn, WGAN) for learning informative summary statistics consistently outperform the standard MSE-only baseline (No OT) and match or outperform OT regularization with handcrafted summary statistics (Fixed OT) on key statistical quantities across all evaluated systems (Table~\ref{tab:unified_results}). All OT-regularized methods outperform the No OT baseline,
confirming the benefit of distributional regularization for long-term statistical
fidelity. Fixed OT performs strongly on $L^1$ histogram error, as expected
from a method explicitly trained to match those statistics, yet our learnable OT methods
match or outperform it in the noisy evaluation while remaining competitive in the clean evaluation. 
The clearest advantage of learnable summaries appears in spectral distance, a metric not enforced by the Fixed OT objective, where our learnable
OT methods consistently outperform both baselines across all systems. This demonstrates the broad coverage achieved by the summary statistics learned via adversarial OT regularization.

\begin{table}[h]
\vskip -0.1in
\centering
\caption{Leading Lyapunov exponent of trained models on single trajectory L96. Implementation details on LLE estimation are provided in Appendix~\ref{app:lle}.}
\label{tab:lle_}
\vskip -0.1in
\small
\setlength{\tabcolsep}{4pt}
\begin{tabular}{lccccc}
\toprule
 & \textbf{Ground} & \textbf{No OT} & \textbf{Fixed OT} & \textbf{Sinkhorn} & \textbf{WGAN} \\
 & \textbf{Truth} & \textbf{\scriptsize{(baseline)}} &  & \textbf{\scriptsize{(ours)}} & \textbf{\scriptsize{(ours)}} \\
\midrule
\textbf{LLE} & 2.334 & 1.615 & 1.890 & \cellcolor{learnablegray}2.030 & \cellcolor{learnablegray}\textbf{2.336} \\
\bottomrule
\end{tabular}
\vskip -0.1in
\end{table}

Beyond attractor statistics, our approach also improves the dynamical properties of the emulator, such as the leading Lyapunov exponent (LLE), which represents the rate at which chaos becomes unpredictable (Table~\ref{tab:lle_}).
For Lorenz-96, our WGAN-style approach produces an emulator with an LLE of $2.336$, nearly matching the ground
truth value of $2.334$, while the No OT baseline and Fixed OT fall significantly
short. Note that Koopman-based methods \cite{cheng2025pfnn} are structurally precluded from reproducing positive
Lyapunov exponents, as any stable finite-dimensional linear system must have LLE $\leq 0$,
whereas chaos requires LLE $> 0$ by definition.

Qualitatively, we consistently observe that the learned emulators produce more realistic chaotic rollouts (Figure~\ref{fig:ks-paper}). For the low-dimensional Lorenz-63 system, we can directly visualize the effect of OT regularization on the geometry of the attractor (Figure~\ref{fig:l63_attractor_comp}), illustrating how our approaches improve the shape of the learned attractor and prevent collapse in high-noise settings. 
Additional visualizations are provided in Appendix~\ref{app:extra_results}.

% \begin{table}[h]
% \vskip -0.05in
% \centering
% \caption{Leading Lyapunov exponent of trained models on single trajectory L96.}
% \label{tab:lle_}
% \vskip -0.1in
% \small
% \setlength{\tabcolsep}{6pt}
% \begin{tabular}{lc}
% \toprule
% \textbf{Method} & \textbf{LLE} \\
% \midrule
% \rowcolor{gray!10} Ground Truth & 2.334 \\
% \midrule
% No OT (baseline)         & 1.615 \\
% Fixed OT \scriptsize{(HS)}        & 1.890 \\
% \midrule
% \rowcolor{learnablegray} Sinkhorn \scriptsize{(ours)} & 2.030 \\
% \rowcolor{learnablegray} WGAN \scriptsize{(ours)}     & \textbf{2.336} \\
% \midrule
% Koopman \scriptsize{(finite-dim.)} & $\leq 0$ \\
% \bottomrule
% \end{tabular}
% \vskip -0.1in
% \end{table}

\section{Conclusion}
We have proposed a family of adversarial optimal transport objectives for training emulators of chaotic dynamical systems. Without requiring any prior knowledge of the system, this approach jointly learns high-quality summary statistics and a physically consistent emulator from a single chaotic trajectory. Our results demonstrate the effectiveness of this type of learnable regularizer in enforcing the statistics of complex, high-dimensional chaotic attractors. While this approach is currently restricted to ergodic systems with a well-defined attractor, we anticipate that our method can be extended to transient dynamics or mildly time-dependent systems by adjusting the time range over which statistics are computed. Future work will also explore extending this framework to stochastic systems, where capturing the interplay between stochasticity and chaotic dynamics presents a natural and challenging generalization.

\section*{Acknowledgements}
This research was supported  by the French National Research Agency (ANR) through the PEPR IA
FOUNDRY project (ANR-23-PEIA-0003). This work is supported by the National Science Foundation under Cooperative Agreement PHY-2019786 (The NSF AI Institute for Artificial Intelligence and Fundamental Interactions, http://iaifi.org/).

\section*{Impact Statement}
This paper presents work whose goal is to advance the field of machine learning. There are many potential societal consequences of our work, none of which we feel must be specifically highlighted here.

\bibliography{references}
\bibliographystyle{icml2026}

%%%%%%%%%%%%%%%%%%%%%%%%%%%%%%%%%%%%%%%%%%%%%%%%%%%%%%%%%%%%%%%%%%%%%%%%%%%%%%%
%%%%%%%%%%%%%%%%%%%%%%%%%%%%%%%%%%%%%%%%%%%%%%%%%%%%%%%%%%%%%%%%%%%%%%%%%%%%%%%
% APPENDIX
%%%%%%%%%%%%%%%%%%%%%%%%%%%%%%%%%%%%%%%%%%%%%%%%%%%%%%%%%%%%%%%%%%%%%%%%%%%%%%%
%%%%%%%%%%%%%%%%%%%%%%%%%%%%%%%%%%%%%%%%%%%%%%%%%%%%%%%%%%%%%%%%%%%%%%%%%%%%%%%
\newpage
\appendix
\onecolumn

% --- Appendix ToC ---

\newcommand{\bv}{\mathbf{v}}
\section{Theoretical Results}
\subsection{Proofs of Section \ref{subsec:valid}}
\subsubsection{Proposition \ref{prop:wasserstein_bound} }
\begin{proof}[Proof of Proposition \ref{prop:wasserstein_bound}]
Since $\Phi_\#\mu=\mu$, we have the identity of pushforwards
\begin{equation*}
f_\#\mu \;=\; f_\#(\Phi_\#\mu) \;=\; (f\circ\Phi)_\#\mu,
\qquad
f_\#\hat\mu \;=\; f_\#(g_\#\mu) \;=\; (f\circ g)_\#\mu.
\end{equation*}
Thus
\begin{equation*}
\mathcal{W}_{\,{d_\mathcal{S}}^p}\!\big(f_\#\mu,f_\#\hat \mu \big)
\;=\;
\mathcal{W}_{\,{d_\mathcal{S}}^p}\!\big((f\circ\Phi)_\#\mu,\,(f\circ g)_\#\mu\big).
\end{equation*}
Consider the coupling $\pi := (\Phi \times g)_\#\mu$ on $\mathcal{U}\times \mathcal{U}$, i.e., sample $\bu\sim\mu$ and pair 
$\Phi(\bu)$ with $g(\bu)$. Its first and second marginals are $\Phi_\#\mu=\mu$ and $g_\#\mu=\hat\mu$, 
so $\pi\in\Pi(\mu,\hat\mu)$. Pushing $\pi$ forward by $(f,f)$ gives a feasible coupling 
\begin{equation*}
    \tilde\pi := (f,f)_\#\pi \;\in\; \Pi\big((f\circ\Phi)_\#\mu,\,(f\circ g)_\#\mu\big).
\end{equation*}
By definition of the $p$-Wasserstein objective,
\begin{equation*}
\mathcal{W}_{\,{d_\mathcal{S}}^p}\!\big((f\circ\Phi)_\#\mu,\,(f\circ g)_\#\mu\big)
\;\le\;
\int d_\mathcal{S}\!\big(f(\Phi(\bu)),\,f(g(\bu))\big)^p\,\text{d}\mu(\bu).
\end{equation*}
Using that $f$ is $L_f$-Lipschitz,
$d_\mathcal{S}\!\big(f(\Phi(\bu)),\,f(g(\bu))\big)
\;\le\;
L_f\,d_\mathcal{U}\!\big(\Phi(\bu),\,g(\bu)\big)$,
\begin{equation*}
\int d_\mathcal{S}\!\big(f(\Phi(\bu)),\,f(g(\bu))\big)^p\,\text{d}\mu(\bu)
\;\le\;
{L_f}^p \int d_\mathcal{U}\!\big(\Phi(\bu),\,g(\bu)\big)^p\,\text{d}\mu(\bu).
\end{equation*}
Finally, writing $\bu_t\sim\mu$ and $\bu_{t+1}=\Phi(\bu_t)$ leads to the desired equation.
\end{proof}
\subsubsection{Corollary \ref{cor:kstep}}
\begin{proof}[Proof of Corollary \ref{cor:kstep}]
Since $\mu$ is invariant for $\Phi$, we have $(\Phi^{\circ k})_{\#}\mu = \mu$ for all $k \in \mathbb{N}$.
Applying Proposition~\ref{prop:wasserstein_bound} with $\Phi$ replaced by $\Phi^{\circ k}$ and $g$ replaced by $g^{\circ k}$ yields
\begin{equation*}
\mathcal{W}_{{d_{\mathcal{S}}}^p}\big(f_{\#}\mu, f_{\#}\hat{\mu}_k\big)=\mathcal{W}_{{d_{\mathcal{S}}}^p}\big((f \circ \Phi^{\circ k})_{\#}\mu, (f \circ g^{\circ k})_{\#}\mu\big)
    \;\le\;
    {L_f}^p \,
    \mathbb{E}_{\bu_t \sim \mu}
    \Big[
        d_{\mathcal{U}}\big(\Phi^{\circ k}(\bu_t), g^{\circ k}(\bu_t)\big)^p
    \Big].
\end{equation*}
\end{proof}

\subsubsection{Proposition \ref{prop:general_bound}}
\begin{proof}[Proof of Proposition \ref{prop:general_bound}]
Let $\pi^* \in \Pi(\mu,\hat\mu)$ be the optimal $p$-Wasserstein coupling on $\mathcal{U}\times\mathcal{U}$. Pushing $\pi^*$ forward by $(f,f)$ gives a feasible coupling $(f,f)_\#\pi^* \;\in\; \Pi(f_\#\mu,\,f_\#\hat\mu)$ on $\mathcal{S}\times\mathcal{S}$.
By definition of the $p$-Wasserstein objective and then using that $f$ is $L_f$-Lipschitz,
\begin{align*}
\mathcal{W}_{\,{d_\mathcal{S}}^p}(f_\#\mu,f_\#\hat \mu )
&\;\le\;
\int d_\mathcal{S}\!\big(f(\bu),\,f(\hat\bu)\big)^p\,\text{d}\pi^*(\bu,\hat\bu)\\
&\;\le\;
{L_f}^p \int d_\mathcal{U}\!(\bu,\,\hat\bu)^p\,\text{d}\pi^*(\bu,\hat\bu)\\
&\;=\;
{L_f}^p\;\mathcal{W}_{\,{d_\mathcal{U}}^p}(\mu,\hat \mu ).
\end{align*}
\end{proof}
\subsubsection{Corollary \ref{cor:general_reduction}}
\begin{proof}[Proof of Corollary \ref{cor:general_reduction}]
By Proposition~\ref{prop:general_bound}, for any $f\in\mathcal{F}\subseteq\mathrm{Lip}_{L_f}(\mathcal{U})$,
\begin{equation}
\label{eq:upper_bound_reduction}
\mathcal{W}_{{d_\mathcal{S}}^p}(f_\#\mu,f_\#\hat\mu)
\;\le\;
{L_f}^{\,p}\,
\mathcal{W}_{{d_\mathcal{U}}^p}(\mu,\hat\mu).
\end{equation}
Taking the supremum over $f\in\mathcal{F}$ yields the upper bound.

To show that this bound is tight, consider the linear map $f(\bu)=A\bu$ with
$A=L_f I$, whose operator norm satisfies
\[
\|A\|_{\mathrm{op}}
\;:=\;
\sup_{\bv\neq 0}\frac{\|A\bv\|_2}{\|\bv\|_2}
=
L_f.
\]
Since $f(\bu)=L_f\bu$ is $L_f$-Lipschitz with respect to the Euclidean metric, it belongs
to the closure of $\mathcal{F}$ by assumption. Combining this with \eqref{eq:upper_bound_reduction} proves the claim.
\end{proof}

\subsection{Proofs of Section \ref{subsec:noisy}}
\subsubsection{Proposition \ref{prop:mse_noise}}
\begin{proof}[Proof of Proposition \ref{prop:mse_noise}]
Expanding the squared norm in~\eqref{eq:noisy_mse}:
\begin{align*}
    \mathcal{L}_{\mathrm{MSE}}^{\mathrm{noisy}} 
    &= \mathbb{E}\Big[\big\|\Phi(\bu_t) - g(\bu_t + \bm{\epsilon}_2)\big\|^2\Big] 
    + \mathbb{E}\big[\|\bm{\epsilon}_1\|^2\big] 
    + 2\,\mathbb{E}\Big[\big\langle \Phi(\bu_t) - g(\bu_t + \bm{\epsilon}_2),\, \bm{\epsilon}_1 \big\rangle\Big].
\end{align*}
The cross term vanishes by independence of $\bm{\epsilon}_1$, and $\mathbb{E}[\|\bm{\epsilon}_1\|^2] = d\sigma_1^2$.

For the first term, Taylor-expand $g(\bu_t + \bm{\epsilon}_2)$ around $\bm{\epsilon}_2 = 0$:
\[
    g(\bu_t + \bm{\epsilon}_2) = g(\bu_t) + Dg(\bu_t)\,\bm{\epsilon}_2 + O(\|\bm{\epsilon}_2\|^2).
\]
Substituting and using $\mathbb{E}[\bm{\epsilon}_2] = 0$:
\begin{align*}
    \mathbb{E}\Big[\big\|\Phi(\bu_t) - g(\bu_t + \bm{\epsilon}_2)\big\|^2\Big]
    &= \mathbb{E}\Big[\big\|\Phi(\bu_t) - g(\bu_t) - Dg(\bu_t)\bm{\epsilon}_2\big\|^2\Big] + O(\sigma_2^4) \\
    &= \mathcal{L}_{\mathrm{MSE}}^{\mathrm{clean}} + \mathbb{E}\big[\|Dg(\bu_t)\bm{\epsilon}_2\|^2\big] + O(\sigma_2^4),
\end{align*}
where the cross term again vanishes. Finally, $\mathbb{E}[\|Dg(\bu_t)\bm{\epsilon}_2\|^2] = \sigma_2^2 \, \mathbb{E}[\|Dg(\bu_t)\|_F^2]$.
\end{proof}

\subsubsection{Corollary \ref{cor:mse_blowup}}
\begin{proof}[Proof of Corollary \ref{cor:mse_blowup}]
For the $k$-step rollout, Proposition~\ref{prop:mse_noise} applies with $g$ replaced by $g^{\circ k}$. By the chain rule, $D(g^{\circ k})(\bu) = \prod_{j=0}^{k-1} Dg(g^{\circ j}(\bu))$. For a chaotic system with maximal Lyapunov exponent $\lambda_\text{max}$, the leading singular value grows as $\|D(g^{\circ k})\|_{\mathrm{op}} \sim e^{\lambda_\text{max} k}$, implying $\|D(g^{\circ k})\|_F^2 \gtrsim d \cdot e^{2\lambda_\text{max} k}$.
\end{proof}

\subsubsection{Proposition \ref{prop:wass_noise}}
\begin{proof}[Proof of Proposition \ref{prop:wass_noise}]
For (i) and (ii), let $\sigma \in \{\sigma_1, \sigma_2\}$ and consider the coupling $\pi = (\mathrm{Id}, \mathrm{Id} + \bm{\epsilon})_\# \mu$, where $\bm{\epsilon} \sim \mathcal{N}(0, \sigma^2 I_d)$ is sampled independently of $\bu \sim \mu$. This coupling has marginals $\mu$ and $\mu * \mathcal{N}_\sigma$, so
\[
    W_p(\mu, \mu * \mathcal{N}_\sigma)^p 
    = \mathcal{W}_{d_\mathcal{U}^p}(\mu, \mu * \mathcal{N}_\sigma)
    \leq \mathbb{E}_{\bu, \bm{\epsilon}}\big[\|\bm{\epsilon}\|^p\big] 
    = \sigma^p \, \mathbb{E}\big[\|Z\|^p\big]
    = \sigma^p \kappa_{p,d}^p.
\]
% Taking $p$-th roots yields the claimed bounds.
% For (iii), let $\pi \in \Pi(\mu, \tilde{\mu})$ be any coupling. Then $(g, g)_\# \pi$ is a feasible coupling for $(g_\#\mu, g_\#\tilde{\mu})$, and since $g$ is $L_g$-Lipschitz,
% \[
%     \mathcal{W}_{d_\mathcal{U}^p}(g_\#\mu, g_\#\tilde{\mu}) 
%     \leq \int d_\mathcal{U}\big(g(\bu), g(\tilde{\bu})\big)^p \, d\pi(\bu, \tilde{\bu})
%     \leq L_g^p \int d_\mathcal{U}(\bu, \tilde{\bu})^p \, d\pi(\bu, \tilde{\bu}).
% \]
% Taking the infimum over couplings $\pi$ and then $p$-th roots gives the result.
\end{proof}

\subsubsection{Theorem \ref{thm:wass_forgetting}}
\begin{proof}[Proof of Thorem \ref{thm:wass_forgetting}]
By the triangle inequality for $W_p$:
\begin{align*}
    W_p\big(\mu^{\mathrm{obs}},\, \hat{\mu}_k^{\mathrm{noisy}}\big) 
    &\leq W_p(\mu^{\mathrm{obs}}, \mu) + W_p(\mu, \nu) + W_p\big(\nu,\, (g^{\circ k})_\# \tilde{\mu}\big).
\end{align*}
By Proposition~\ref{prop:wass_noise} (i), the first term is bounded by $\sigma_1 \kappa_{p,d}$. By exponential mixing (Definition~\ref{def:exp_mixing}) 
applied to the initial measure $\tilde{\mu}$, the third term satisfies $W_p(\nu, (g^{\circ k})_\# \tilde{\mu}) \leq C\rho^k$. The limit follows immediately.
\end{proof}

\subsubsection{Corollary \ref{cor:summary_wass_forgetting}}
\begin{proof}[Proof of Corollary \ref{cor:summary_wass_forgetting}]
Since $f$ is $L_f$-Lipschitz, $W_p(f_\#\mu, f_\#\nu) \leq L_f \, W_p(\mu, \nu)$ for any measures $\mu, \nu$ on $\mathcal{U}$. The result follows by applying this contraction to~\eqref{eq:wass_forgetting}.
\end{proof}

\subsection{Linear Summary Case}
Finally, Proposition \ref{prop:optimal_linear_summary} discusses properties of linear summary maps, with an exact result that only exactly holds when the optimal transport map $T^*$ commutes with the summary map $f(\bu) = A\bu$. Nevertheless, this provides some intuition for the optimal directions chosen by linear summary maps.

\begin{proposition}[Optimal linear summary via displacement covariance]
\label{prop:optimal_linear_summary}
Let $\mathcal{U} \subseteq \mathbb{R}^d$ and $\mathcal{S} = \mathbb{R}^k$ with $d_\mathcal{U}$, $d_\mathcal{S}$ 
the Euclidean metrics. Suppose $\mu, \hat{\mu}$ are absolutely continuous measures on $\mathcal{U}$ 
with finite second moments, and let $T^*$ be the Brenier map satisfying $T^*_\# \mu = \hat{\mu}$. 
Define the \emph{displacement covariance}
\begin{equation}
\label{eq:displacement_cov}
    C_T := \mathbb{E}_{\bu \sim \mu}\big[(\bu - T^*(\bu))(\bu - T^*(\bu))^\top\big],
\end{equation}
with eigenvalues $\sigma_1(C_T) \geq \cdots \geq \sigma_d(C_T) \geq 0$ and eigenvectors 
$\mathbf{v}_1, \ldots, \mathbf{v}_d$. Then $\mathrm{tr}(C_T) = \mathcal{W}_{d_\mathcal{U}^2}(\mu, \hat{\mu})$, and 
for $\mathcal{F}_k = \{f(\bu) = A\bu : A \in \mathbb{R}^{k \times d}, \|A\|_{\mathrm{op}} \leq L_f\}$:
\begin{equation}
\label{eq:optimal_linear}
    \max_{f \in \mathcal{F}_k} \mathcal{W}_{d_\mathcal{S}^2}(f_\#\mu, f_\#\hat{\mu}) \leq L_f^2 \sum_{i=1}^{k} \sigma_i(C_T),
\end{equation}
with equality when the Brenier map $T^*$ is separable in the eigenbasis of $C_T$. If equality holds, the optimal linear summary is $A^* = L_f [\mathbf{v}_1 \cdots \mathbf{v}_k]^\top$.
\end{proposition}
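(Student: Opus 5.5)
The plan has three parts: the trace identity, the upper bound, and the equality case.

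\emph{Trace identity.} Expand $\mathrm{tr}(C_T)=\mathbb{E}_{\mu}\|\bu-T^*(\bu)\|^2$. Brenier's theorem applies because $\mu$ is absolutely continuous with finite second moment, and it says that $(\mathrm{Id},T^*)_\#\mu$ is the optimal coupling for the quadratic cost. The right-hand side is therefore exactly $\mathcal{W}_{d_\mathcal{U}^2}(\mu,\hat\mu)$.

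\emph{Upper bound.} Fix $A$ with $\|A\|_{\mathrm{op}}\le L_f$ and take the feasible coupling $(A,A\circ T^*)_\#\mu\in\Pi(A_\#\mu,A_\#\hat\mu)$. This is the argument of Proposition~\ref{prop:general_bound}, but applied to the specific coupling $(\mathrm{Id},T^*)_\#\mu$ so that the anisotropy of the displacement is kept. It gives
\[
\mathcal{W}_{d_\mathcal{S}^2}(A_\#\mu,A_\#\hat\mu)\le \mathbb{E}_\mu\|A(\bu-T^*(\bu))\|^2=\mathrm{tr}(AC_TA^\top)=\mathrm{tr}(A^\top A\,C_T).
\]
Set $M:=A^\top A$. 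It is positive semidefinite, has rank at most $k$, and satisfies $M\preceq L_f^2 I$. By Ky Fan's maximum principle (or von Neumann's trace inequality), $\mathrm{tr}(MC_T)\le \sum_i \sigma_i(M)\sigma_i(C_T)\le L_f^2\sum_{i=1}^k\sigma_i(C_T)$, since at most $k$ eigenvalues of $M$ are nonzero and each is at most $L_f^2$. Maximizing over $\mathcal{F}_k$ gives \eqref{eq:optimal_linear}. The maximum is attained because $\mathcal{F}_k$ is compact and $A\mapsto\mathcal{W}$ is continuous for measures with finite second moments.

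\emph{Equality case.} Take $A^*=L_f[\mathbf{v}_1\cdots\mathbf{v}_k]^\top$. Its upper bound is $\mathrm{tr}(A^*C_TA^{*\top})=L_f^2\sum_{i\le k}\sigma_i(C_T)$, so it suffices to show that the coupling $(A^*,A^*\circ T^*)_\#\mu$ is optimal between the projected measures. Interpret the hypothesis that $T^*$ is separable in the eigenbasis of $C_T$ as: in coordinates $\bu=\sum_i x_i\mathbf{v}_i$, the map acts as $T^*(\bu)=\sum_i \tau_i(x_1,\dots,x_k)\mathbf{v}_i$ for $i\le k$. That is, the first $k$ output coordinates depend only on the first $k$ input coordinates, so $A^*\circ T^*=S\circ A^*$ with $S(\mathbf{y})=L_f\,\tau(\mathbf{y}/L_f)$. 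The map $T^*=\nabla\psi$ is the gradient of a convex function, and under separability $\psi$ splits additively, so $\tau=\nabla_{x_{1:k}}\psi_1$ is itself the gradient of a convex function. Hence $S$ is a gradient of a convex function (after rescaling) pushing $A^*_\#\mu$ to $A^*_\#\hat\mu$. By Brenier uniqueness, or simply by cyclical monotonicity of its graph, the coupling $(\mathrm{Id},S)_\#A^*_\#\mu$ is optimal, and the inequality becomes an equality.

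\emph{Main obstacle.} The delicate step is pinning down a version of "separable" under which the projected Brenier map is still the gradient of a convex function. I would define separability as the additive splitting of the Brenier potential $\psi$ across $\mathrm{span}(\mathbf{v}_{1:k})\oplus\mathrm{span}(\mathbf{v}_{k+1:d})$, and then verify optimality through cyclical monotonicity, which avoids any need for absolute continuity of the projected measures. Without some such hypothesis the inequality can be strict, which is why the statement claims only "$\le$" in general.
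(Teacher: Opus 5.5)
Your argument is correct and has the same skeleton as the paper's proof: Brenier for $\mathrm{tr}(C_T)=\mathcal{W}_{d_\mathcal{U}^2}(\mu,\hat\mu)$, the pushed-forward coupling $(A,A)_\#(\mathrm{Id},T^*)_\#\mu$ giving the bound $\mathrm{tr}(AC_TA^\top)$, a spectral maximization, and separability for the equality case. However, both technical steps are carried out differently.

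\textbf{Spectral step.} The paper writes $A=USV^\top$ and asserts $(V^\top C_TV)_{ii}\le\sigma_i(C_T)$ entrywise ``by Cauchy interlacing.'' Interlacing controls the eigenvalues of the compression $V^\top C_TV$, not its diagonal entries; with $V=[\mathbf{v}_2\ \mathbf{v}_1]$ one gets $(V^\top C_TV)_{22}=\sigma_1(C_T)$. So that sentence is false in general, although the conclusion survives if one first bounds $s_i^2\le L_f^2$ and then applies Ky Fan to $\mathrm{tr}(V^\top C_TV)$. Your formulation with $M=A^\top A$, $\mathrm{rank}\,M\le k$, $0\preceq M\preceq L_f^2 I$, and von Neumann/Ky Fan is the clean version of this step. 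You also justify that the maximum over $\mathcal{F}_k$ is attained, which the paper takes for granted.

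\textbf{Equality case.} The paper assumes coordinatewise separability $\mathbf{v}_i^\top T^*(\bu)=h_i(\mathbf{v}_i^\top\bu)$ with monotone $h_i$, and reduces to $k$ one-dimensional monotone transports. Any coupling of the projected measures induces couplings of their coordinate marginals, so the quadratic cost is at least the sum of the 1D optimal costs, and the monotone $h_i$ attain that sum. This route is elementary and uses only 1D optimal transport. Your route splits the Brenier potential additively across $\mathrm{span}(\mathbf{v}_{1:k})\oplus\mathrm{span}(\mathbf{v}_{k+1:d})$ and uses cyclical monotonicity. It relies on the Knott--Smith/R\"uschendorf optimality criterion, or on Brenier uniqueness, since the projection of $\mu$ is absolutely continuous. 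In exchange it is strictly more general: it allows arbitrary gradient-of-convex interaction among the top $k$ eigencoordinates. It also contains the paper's case, because $(h_1,\dots,h_k)$ is the gradient of the separable convex function $\sum_i H_i$ with $H_i'=h_i$.

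\textbf{Literal reading of the last sentence.} Neither argument shows that $A^*$ is optimal \emph{whenever} equality holds. If you want this, note that any maximizer must saturate your Ky Fan bound. Under a spectral gap $\sigma_k(C_T)>\sigma_{k+1}(C_T)$, with $\sigma_{d+1}:=0$, saturation forces $A^\top A=L_f^2\sum_{i\le k}\mathbf{v}_i\mathbf{v}_i^\top$, hence $A=UA^*$ with $U$ orthogonal. Orthogonal maps leave the projected transport cost unchanged, so $A^*$ is optimal.
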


\begin{proof}[Proof of Proposition~\ref{prop:optimal_linear_summary}]
By Brenier's theorem, the optimal coupling between $\mu$ and $\hat{\mu}$ is 
$\gamma^* = (\mathrm{Id}, T^*)_\#\mu$, so
\[
    \mathcal{W}_{d_\mathcal{U}^2}(\mu, \hat{\mu}) 
    = \mathbb{E}_{\bu \sim \mu}\big[\|\bu - T^*(\bu)\|^2\big] 
    = \mathrm{tr}(C_T).
\]

For any linear map $f(\bu) = A\bu$, the pushforward coupling $(A, A)_\#\gamma^*$ is feasible 
for $(A_\#\mu, A_\#\hat{\mu})$, hence
\begin{equation}
\label{eq:upper_bound_linear}
    \mathcal{W}_{d_\mathcal{S}^2}(A_\#\mu, A_\#\hat{\mu}) 
    \leq \mathbb{E}_{\bu \sim \mu}\big[\|A(\bu - T^*(\bu))\|^2\big] 
    = \mathrm{tr}(A C_T A^\top).
\end{equation}

Write the SVD of $A$ as $A = U S V^\top$ where $U \in \mathbb{R}^{k \times k}$ and 
$V \in \mathbb{R}^{d \times k}$ have orthonormal columns, and 
$S = \mathrm{diag}(s_1, \ldots, s_k)$ with $s_1 \geq \cdots \geq s_k \geq 0$. 
The constraint $\|A\|_{\mathrm{op}} \leq L_f$ implies $s_i \leq L_f$ for all $i$. Then
\[
    \mathrm{tr}(A C_T A^\top) = \mathrm{tr}(V^\top C_T V S^2) = \sum_{i=1}^{k} s_i^2 \, (V^\top C_T V)_{ii}.
\]
Since $C_T$ has eigenvalues $\sigma_1(C_T) \geq \cdots \geq \sigma_d(C_T)$, the diagonal 
entries of $V^\top C_T V$ for any $V$ with orthonormal columns satisfy 
$(V^\top C_T V)_{ii} \leq \sigma_i(C_T)$ by the Cauchy interlacing theorem, with equality 
when the columns of $V$ are the eigenvectors $\mathbf{v}_1, \ldots, \mathbf{v}_k$. 
Subject to $s_i \leq L_f$, the maximum of $\sum_{i=1}^k s_i^2 \sigma_i(C_T)$ is achieved 
by $s_i = L_f$ for all $i$, giving
\[
    \max_{\|A\|_{\mathrm{op}} \leq L_f} \mathrm{tr}(A C_T A^\top) = L_f^2 \sum_{i=1}^{k} \sigma_i(C_T),
\]
achieved by $A^* = L_f [\mathbf{v}_1 \cdots \mathbf{v}_k]^\top$.

Equality holds in~\eqref{eq:upper_bound_linear} if and only if $(A, A)_\#\gamma^*$ is optimal 
for $(A_\#\mu, A_\#\hat{\mu})$, i.e., $AT^*(\bu) = T^*_A(A\bu)$ where $T^*_A$ is the Brenier 
map for the projected measures. For $A^* = L_f[\mathbf{v}_1 \cdots \mathbf{v}_k]^\top$, this 
holds when $T^*$ is separable in the eigenbasis: $\mathbf{v}_i^\top T^*(\bu) = h_i(\mathbf{v}_i^\top \bu)$ 
for monotone $h_i$. In this case, the projected transport decouples into $k$ independent 
1D optimal transports, and $(A^*, A^*)_\#\gamma^*$ is optimal.
\end{proof}
\newpage
\newpage
\section{Metrics}
\label{sec:metrics}
Following \citet{jiang2023training}, we adopt the same metrics reported in their work. 

\textbf{RMSE.}
We report the one-step \emph{relative} root mean squared error (RMSE) to measure short-term prediction accuracy,
defined as
\[
\mathrm{RMSE}
=
\frac{\lVert \bu_{t+1} - \hat{g}_\theta(\bu_t,) \rVert_2}
     {\lVert \bu_{t+1} \rVert_2},
\]
where $\hat{g}_\theta$ denotes the learned emulator.

\textbf{Energy spectrum error.}
We compute the relative mean absolute error of the energy spectrum, defined as the squared magnitude of the
spatial Fourier transform.
Let $\mathcal{F}[\bu_t]$ denote the spatial FFT of $\bu_t$.
The spectral error is computed as
\[
\frac{1}{T}
\sum_{t\in 1:T}
\frac{\big\lVert |\mathcal{F}[\bu_t]|^2 - |\mathcal{F}[\hat{\bu}_t]|^2 \big\rVert_1}
     {\big\lVert |\mathcal{F}[\bu_t]|^2 \big\rVert_1},
\]
where the average is taken over time along the rollout.

\textbf{$L^1$ histogram error.}
The histogram error is computed as the $L^1$ distance between empirical histograms of summary statistics
evaluated on the true and predicted trajectories.
For consistency across methods, histograms are always computed using the system-dependent fixed summary presented in Table \ref{tab:histogram_stats} regardless of whether the model was trained with fixed or learnable summaries.

\subsection{Leading Lyapunov Exponent}
\label{app:lle}
The leading Lyapunov exponent (LLE) quantifies the average rate of exponential divergence between nearby trajectories, and serves as a defining signature of chaos. We estimate it using the Benettin method~\cite{benettin1980lyapunov}. Starting from an initial state $\mathbf{u}_0$, two trajectories are evolved in parallel: a reference $\mathbf{u}_t$ and a perturbed copy $\tilde{\mathbf{u}}_t$, with initial separation $\|\tilde{\mathbf{u}}_0 - \mathbf{u}_0\| = d_0 = 10^{-2}$.

At each rescaling event, triggered whenever the separation $d_t = \|\tilde{\mathbf{u}}_t - \mathbf{u}_t\|$ leaves the band $[10^{-5},\, 10]$, we record the local expansion factor $a = d_t / d_0$, rescale the perturbation back to magnitude $d_0$, and accumulate $\log a$. The LLE is then estimated as
\begin{equation*}
    \lambda = \frac{1}{T}\sum_{k} \log a_k,
\end{equation*}
where the sum runs over all rescaling events within a window of $T = 1000$ steps, preceded by a warm-up phase of $100$ steps to allow the trajectory to settle onto the attractor. The integration step size is $\delta t = 0.1$.

This procedure is applied twice with identical hyperparameters: once using the numerical integrator as the step function to obtain the \textbf{ground-truth LLE}, and once using the learned emulator to obtain the \textbf{predicted LLE}. Comparing these two values directly quantifies how faithfully the emulator reproduces the chaotic regime of the underlying dynamics.
\newpage
\section{Lipschitz Regularity of the Summary Map}
\label{app:lipschitz}

\subsection{Implicit regularity from joint training}

Although no explicit Lipschitz constraint is imposed on $f$ during training, the joint MSE-OT 
objective implicitly induces regularity. As established in Proposition~\ref{prop:mse_noise} 
and Corollary~\ref{cor:mse_blowup}, the MSE term bounds and stabilizes the adversarial OT 
objective, while the OT term corrects long-term behavior not captured by the MSE loss. 
Combined with the warm-start strategy (pretraining on MSE before activating the OT term) 
and early stopping of the maximization phase for the Sinkhorn variant, this confines $f$ 
to a well-behaved Lipschitz regime throughout optimization.

Table~\ref{tab:jacobian_norms} reports the distribution of Jacobian spectral norms 
$\|Df(x)\|_2$ over the L96 validation set. For WGAN and Sinkhorn with early stopping, 
norms are strictly positive and well-distributed across the attractor, confirming nontrivial 
local geometry. Without early stopping, Sinkhorn exhibits degenerate behavior: the majority 
of norms collapse near zero with rare large spikes, indicating instability in the learned 
geometry. Early stopping eliminates this entirely.

\begin{table}[h]
\centering
\small
\setlength{\tabcolsep}{3pt}
\caption{Distribution of Jacobian spectral norms $\|Df(x)\|_2$ over the L96 validation 
set.}
\label{tab:jacobian_norms}
\begin{tabular}{lccccc}
\toprule
\textbf{Method} & \textbf{Min} & \textbf{05th Percentile} & \textbf{Median} & \textbf{95th Percentile} & \textbf{Max} \\
\midrule
\addlinespace[0.3em]
\multicolumn{6}{l}{\textbf{Sinkhorn}} \\
Early stopping    & 0.250 & 0.278 & 0.400 & 0.623 & 0.716 \\
No early stopping & 0.000 & 0.000 & 0.000 & 0.394 & 5.762 \\
\addlinespace[0.3em]
\multicolumn{6}{l}{\textbf{WGAN}} \\
              & 1.290 & 1.366 & 1.638 & 2.009 & 2.126 \\
\bottomrule
\end{tabular}
\end{table}

Consistent behavior was observed across train, validation, and test splits, confirming that the learned Lipschitz behavior generalizes and is not an artifact of overfitting.

Let $f(x)$ be an $M$-layer MLP with weight matrices $W_1, \ldots, W_M$ and pointwise activations $\sigma_\ell$ that are 1-Lipschitz (e.g.\ ReLU). Following   \cite{khromov2024lipschitz}, the global Lipschitz constant of $f$ satisfies
\begin{equation}
\textbf{Upper Bound: }
    \operatorname{Lip}(f) \leq \prod_{\ell=1}^{M} {\|W_\ell\|}_2,
\end{equation}while the lower bound is given by 
\begin{equation}
\textbf{Lower Bound: }
    \sup_{x \in \mathcal{D}} \|D f(x)\|_2
    \leq
    \operatorname{Lip}(f),
\end{equation}where $\mathcal{D}$ is some subset of the data (e.g. train split) and $D$ is the Jacobian. Table~\ref{tab:lip_bound_} reports both bounds evaluated on the L96 training split. In all 
cases the gap between upper and lower bound remains moderate. 

\begin{table}[h]
\vskip -0.05in
\centering
\small
\caption{Empirical Lipschitz bounds on training data  L96.}
\label{tab:lip_bound_}
\begin{tabular}{lcc}
\toprule
\textbf{Method}  & \textbf{Upper Bound} & \textbf{Lower Bound}  \\
\midrule
WGAN              & 8.70 & 2.11 \\
Sinkhorn (early stopping)         & 1.94 & 0.77\\
\bottomrule
\end{tabular}
\vskip -0.05in
\end{table}

This provides practical justification for the $L$-Lipschitz assumption underlying the 
theoretical analysis.

\subsection{Joint hinge regularization}
For settings where explicit control over the Lipschitz constant of $f$ is required, we 
propose a joint hinge regularization that enforces $L_{\min} \leq \mathrm{Lip}(f) \leq 
L_{\max}$ during training, turning the theoretical constants into controllable 
hyperparameters. The regularizer takes the form
\begin{equation}
\label{eq:lip_reg}
    \mathcal{R}(f) = S_\beta\!\left(\mathrm{Lip}_{\mathrm{upper}}(f) - L_{\max}\right) 
    + S_\beta\!\left(L_{\min} - \mathrm{Lip}_{\mathrm{lower}}(f)\right),
\end{equation}
where $S_\beta(x) = \frac{1}{\beta}\log(1 + e^{\beta x})$ is a smooth approximation to 
$\max(x, 0)$, recovering the standard hinge as $\beta \to \infty$. Inspired by \cite{khromov2024lipschitz} results, we propose
\begin{equation}
\mathrm{Lip}_{\mathrm{upper}}(f) = \prod_{\ell=1}^M {\|W_\ell\|}_2,
\end{equation} This penalizes whenever 
$\mathrm{Lip}_{\mathrm{upper}}(f)$ exceeds $L_{\max}$, preventing the summary map from becoming 
geometrically irregular. The lower bound term uses
\begin{equation}
    \mathrm{Lip}_{\mathrm{lower}}(f) = \frac{1}{|\mathcal{B}|}\sum_{x \in \mathcal{B}} 
    \|Df(x)\|_2,
\end{equation}
the mean Jacobian spectral norm over a small subset $\mathcal{B}$ of states sampled from 
the current batch, as a local sensitivity estimate. This penalizes whenever $f$ becomes 
locally flat, preventing feature collapse. While the maximum Jacobian norm better 
approximates the true Lipschitz constant \citep{khromov2024lipschitz}, the mean is more 
informative for detecting practical collapse, where the map degenerates on most of the 
attractor rather than at isolated points.

\subsection{Experiments}

We validate the regularizer on L96 with the WGAN variant, fixing all architecture and 
training configuration and varying only $L_{\max}$ (with $L_{\min} = 0.01$ fixed throughout). 
Table~\ref{tab:lip_bounds} reports the resulting Lipschitz bounds; 
Table~\ref{tab:lip_clean_noisy} reports performance on noisy and clean 
trajectories. 

\begin{table}[h]
\vskip -0.1in
\centering
\small
\caption{Empirical Lipschitz bounds on training data (WGAN, L96).}
\label{tab:lip_bounds}
\begin{tabular}{lcc}
\toprule
\textbf{Method}  & \textbf{Upper Bound} & \textbf{Lower Bound}  \\
\midrule
$L_{\max} = 4$              & 3.11 & 0.91 \\
$L_{\max} = 10$             & 7.05 & 1.52 \\
No regularization           & 8.69 & 1.61 \\
\bottomrule
\end{tabular}
\vskip -0.1in
\end{table}

\begin{table}[H]
\vskip -0.1in
\centering
\small
\setlength{\tabcolsep}{4pt}
\caption{Effect of Lipschitz regularization on WGAN performance for L96 in the noisy ($\sigma=0.3$) and clean ($\sigma=0.0$) settings.}
\label{tab:lip_clean_noisy}
\begin{tabular}{lcccccc}
\toprule
& \multicolumn{3}{c}{\textbf{Noisy} ($\sigma=0.3$)} & \multicolumn{3}{c}{\textbf{Clean} ($\sigma=0.0$)} \\
\cmidrule(lr){2-4} \cmidrule(lr){5-7}
\textbf{Method} 
& \textbf{L1 hist.} $\downarrow$ 
& \textbf{Spec. dist.} $\downarrow$ 
& \textbf{RMSE} $\downarrow$
& \textbf{L1 hist.} $\downarrow$ 
& \textbf{Spec. dist.} $\downarrow$ 
& \textbf{RMSE} $\downarrow$ \\
\midrule
$L_{\max}=4$      & 0.176 & 0.133 & 0.368 & 0.090 & 0.144 & 0.279 \\
$L_{\max}=10$     & 0.177 & 0.135 & 0.367 & 0.082 & 0.151 & 0.278 \\
No regularization & 0.151 & 0.145 & 0.371 & 0.115 & 0.175 & 0.283 \\
\bottomrule
\end{tabular}
\vskip -0.1in
\end{table}

The upper bound remains below the prescribed threshold throughout optimization for 
regularized runs, while the unregularized model naturally operates within a compatible 
regime (Table~\ref{tab:lip_bounds}). Performance is preserved or marginally improved 
under regularization across both noisy and clean settings. Figure~\ref{fig:lipschitz_constants_l96} show
that bounds remain within the prescribed regime throughout optimization.

\begin{wrapfigure}[8]{r}{0.45\textwidth}
    \centering
    \includegraphics[width=1\linewidth]{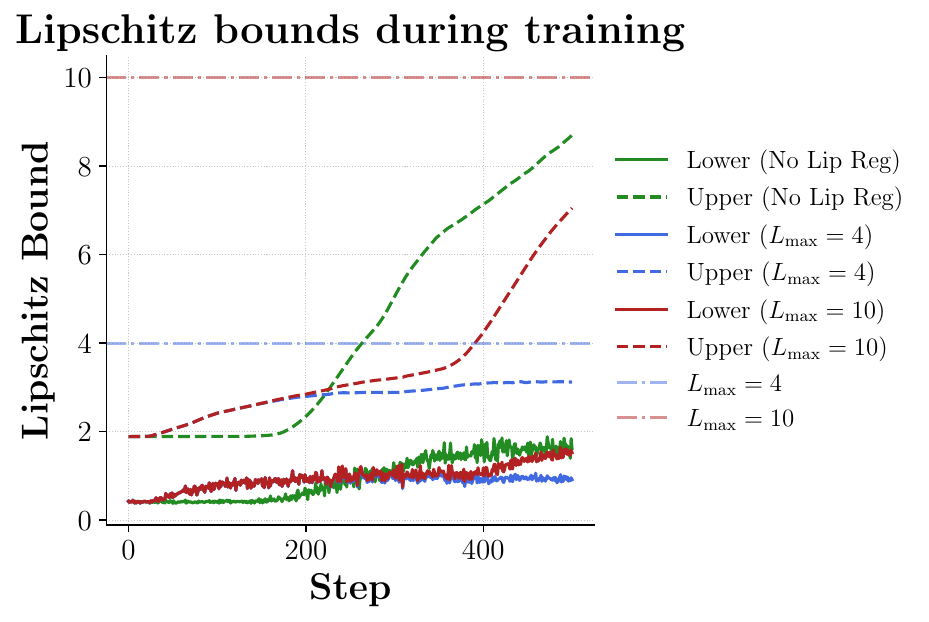}
    \caption{Lipschitz bounds during training for the WGAN summary map $f$ on L96, 
    under three regularization settings. Upper bounds (dashed) are computed as 
    $\prod_\ell \|W_\ell\|_2$; lower bounds (solid) are estimated from the mean 
    Jacobian spectral norm over a batch subset. Prescribed thresholds $L_{\max} = 4$ 
    and $L_{\max} = 10$ are shown as horizontal dash-dotted lines.}
    \label{fig:lipschitz_constants_l96}
\end{wrapfigure}

\paragraph{Practical trade-off.} Explicit regularization via per-step Jacobian 
computation increases training cost by approximately $4\times$. Given that the 
unregularized model already operates within a stable Lipschitz regime in our setting, 
we present the unregularized formulation as the primary approach and the regularized 
variant as a principled option when explicit control over the Lipschitz constant is required.
\newpage
\newpage
\section{Experimental Details}
\subsection{Experimental details for L-96}
\label{app:exp-details}

\paragraph{Dataset generation.}
We adopt exactly the dataset generation procedure of \citet{jiang2023training}.
All experiments are conducted on the Lorenz--96 system with $60$ variables, defined by
\begin{equation}
\label{eq:l96-app}
    \dot{\bu}_i = (\bu_{i+1}-\bu_{i-2})\bu_{i-1} - \bu_i + F,
\end{equation}
with cyclic indexing.
We generate $2{,}000$ trajectories of length $2{,}000$, where the forcing parameter is sampled independently
for each trajectory as $F^{(n)} \sim \mathrm{Unif}[10,18]$.
Observations are corrupted with additive Gaussian noise with standard deviation
$0.3\cdot\mathrm{std}$, where $\mathrm{std}$ denotes the empirical standard deviation of the
corresponding clean trajectory.

\paragraph{Training protocol.}
Training follows the same procedure as \citet{jiang2023training}.
We use teacher forcing over sliding windows of length $100$ with stride $2$:
the emulator predicts odd timesteps while even timesteps are taken from ground truth. The OT weight $\lambda$ is kept the same for the three set-ups $(\lambda=3)$. 

\paragraph{Emulator architecture.}
All experiments use the same Fourier Neural Operator (FNO; \citealt{li2021fourier}) architecture as in
\citet{jiang2023training}.
The emulator architecture and all associated hyperparameters are kept fixed across all baselines and
learnable OT variants, ensuring that differences in performance arise solely from the OT regularization.

\paragraph{Learnable summary network.}
The summary network $f_\theta : \mathbb{R}^m \to \mathbb{R}^{m \times d}$ maps a state $\mathbf{u}_t \in \mathbb{R}^m$ to $m$ embeddings in $\mathbb{R}^d$, one per spatial location.
This matches the output structure of the fixed statistics $S$, which also assigns a vector in $\mathbb{R}^d$ to each spatial point, with $d=3$ hardwired physical quantities (e.g., $\partial_t u_i$, advection, $u_i$ for L96).
In both cases, applying $f$ (or $S$) across all timesteps of a trajectory yields a point cloud in $\mathbb{R}^d$, over which the OT metrics between predicted and true trajectories is computed. 

\begin{example}Take $\mathbf{u}_t = (u_1^t, u_2^t) \in \mathbb{R}^2$ and trajectory $[\mathbf{u}_1, \ldots, \mathbf{u}_T]$. The fixed handcrafted summary map is:$$\mathbf{u}_t \mapsto \begin{bmatrix} (\partial_t u_1^t,\ \text{adv}_1^t,\ u_1^t) \ (\partial_t u_2^t,\ \text{adv}_2^t,\ u_2^t) \  \end{bmatrix} \in \mathbb{R}^{2 \times 3}.$$
Our learned MLP summary map is:
$$\mathbf{u}_t \mapsto \text{MLP}(u_1^t, u_2^t) \in\mathbb{R}^{2 \times d}.$$
In both cases, across the full trajectory you collect $T \times 2$ vectors in $\mathbb{R}^d$ as a point cloud. 
    \end{example}

\paragraph{Overview of OT regularization strategies.}
We compare the fixed-statistics OT baseline introduced by \citet{jiang2023training}
against two learnable OT-based regularizers.
All three approaches use an OT discrepancy between empirical distributions induced by the emulator,
but differ in whether the summary representation is fixed or learned, and whether the OT term is optimized
adversarially.

\paragraph{Fixed OT baseline.}
The fixed baseline of \citet{jiang2023training} computes the Sinkhorn divergence between empirical
distributions of hand-crafted physics-inspired summaries (c.f. evaluation protocol in Section~\ref{par:eval_protocol})
\begin{equation}
\label{eq:fixed-summary}
S(\bu_i) = \{\dot{\bu}_i,\; (\bu_{i+1}-\bu_{i-2})\bu_{i-1},\; \bu_i\}.
\end{equation}
The Sinkhorn divergence is computed with entropic regularization $\varepsilon = 0.02$ and $p=2$.
This method is non-adversarial: the summary representation is fixed, and no maximization is performed
over the summary space.

\paragraph{Learnable Sinkhorn regularization.}
In the learnable Sinkhorn variant, the fixed summary $S$ is replaced by a learnable summary network
$f_\theta$.
The resulting objective takes the form of an adversarial min--max game
(Eq.~\eqref{eq:sinkhorn-game}), where the emulator minimizes prediction error regularized by a Sinkhorn
divergence, while the summary network maximizes this divergence by shaping the summary space.
The Sinkhorn divergence uses the same parameters as in the fixed baseline
($\varepsilon = 0.02$, $p=2$) and is implemented using GeomLoss \footnote{\url{https://www.kernel-operations.io/geomloss/}}. Notably, we employed early-stopping (at 350/400 out of 500 training steps).  for a more stable training. 

\paragraph{Learnable WGAN regularization.}
The learnable WGAN variant estimates the OT discrepancy via adversarial training following
Eq.~\eqref{eq:wgan-game}.
In addition to the summary network $f_\theta$, a critic
$\varphi_\phi : \mathbb{R}^d \to \mathbb{R}$ is optimized to maximize the dual formulation of the
$1$-Wasserstein distance.
The emulator minimizes prediction error regularized by this adversarial Wasserstein estimate.
Lipschitz continuity of the critic is enforced via weight clipping to the interval $[-0.01,\,0.01]$.

Inspired by the results of the theoretical section (Prop~\ref{prop:wasserstein_bound}), we use warm-start pretraining
on the MSE objective, and subsequent activation of the OT term. 

\paragraph{Summary and critic architectures.}
The learnable summary network $f_\theta$ is implemented as a three-layer multilayer perceptron with
two hidden layers of width $128$ and ReLU activations.
For the WGAN variant, the critic $\varphi_\phi$ is implemented as a two-layer multilayer perceptron
with hidden dimension $64$ and ReLU activations.
All architectural choices are kept fixed across experiments.  Motivated by the theoretical formulation, we also enforce Lipschitz continuity of the learnable summary network.
In both learnable setups (Sinkhorn and WGAN), this is achieved via weight clipping of the summary network parameters.

\subsection{Experimental details for KS}
\label{app:exp-details_ks}

\paragraph{System and dataset.}
We consider the Kuramoto--Sivashinsky equation,
\begin{equation}
    \partial_t \bu + \bu\partial_x \bu + \partial_{xx} \bu + \partial_{xxxx} \bu = 0,
\end{equation} discretized on $256$ grid points with periodic boundary conditions, under single training trajectory is used with additive Gaussian observation noise
with standard deviation $0.3\cdot\mathrm{std}$, where $\mathrm{std}$ denotes the empirical standard deviation of the
corresponding clean trajectory. All models are evaluated over rollouts of $1{,}000$ steps.

\paragraph{Emulator architecture.}
Similarly to the L96 experiments, the same Fourier Neural Operator (FNO; \citealt{li2021fourier}) architecture as in
\citet{jiang2023training} was used, with an increased capacity demand due to the increased physical dimension.  

\paragraph{Summary and critic architectures.}
The MLP summary network follows the same architecture as in the L96 experiments.
For the 1D convolutional variant, the MLP is replaced by a stack of 1D convolutions
with circular padding, providing a spatial inductive bias appropriate for the periodic KS domain.

\paragraph{Training and evaluation.}
All methods undergo a grid search over the OT loss weight $\lambda$, and best
metrics are reported for each method. Rollouts are evaluated over $1{,}000$ steps
on both clean and noisy trajectories. All methods use teacher forcing over sliding windows of length $100$ and stride $2$.
\subsection{Experimental details for 2D Kolmogorov Flow}
\label{app:exp-details_kolmog}

Kolmogorov flow is a two-dimensional incompressible Navier–Stokes system driven by a sinusoidal body force, widely used as a benchmark for turbulent dynamics~\cite{Chandler_Kerswell_2013}.

\paragraph{System and dataset.}
We consider the 2D Navier--Stokes equations in vorticity form on a periodic domain: 

\begin{equation}
\frac{\partial \omega}{\partial t} + D(\omega, \psi) = \frac{1}{Re}\nabla^2\omega - \chi\omega + f
\end{equation}

\begin{equation}
\nabla^2 \psi = -\omega,
\end{equation}

\noindent where the velocity field $\mathbf{v} = (v_x, v_y)$ is recovered from the streamfunction via:

\begin{equation}
v_x = \frac{\partial \psi}{\partial y}, \qquad v_y = -\frac{\partial \psi}{\partial x}.
\end{equation}

The vorticity is $\omega = (\nabla \times \mathbf{v}) \cdot \hat{z}$, and $D(\omega, \psi) = \frac{\partial \psi}{\partial y}\frac{\partial \omega}{\partial x} - \frac{\partial \psi}{\partial x}\frac{\partial \omega}{\partial y}$ is the system Jacobian. We explore the Kolmogorov flow regime with Reynolds number $\mathrm{Re} = 10^4$, time-constant forcing $f$ at a
given wavenumber and drag
coefficient $\chi = 0.1$. Observations are corrupted with additive Gaussian noise
with standard deviation $0.3\cdot\mathrm{std}$, where $\mathrm{std}$ denotes the empirical standard deviation of the
corresponding clean trajectory. All simulations are performed with the \texttt{py2d} library \cite{jakhar2024learning}.

\paragraph{Emulator architecture.}
We use a UNet encoder-decoder with spectral convolution layers, following the architecture
proposed in \citet{jiang2025hierarchical} for this class of problems.

\paragraph{Summary and critic architectures.}
For the learnable OT variants, the MLP summary network used in the 1D experiments is
replaced by a convolutional summary map. This network applies a stack of 2D convolutions
with circular padding to each vorticity snapshot, providing a spatial inductive bias
appropriate for 2D periodic fields while preserving the same adversarial training
framework as in the 1D setting. The critic architecture for the WGAN variant is
unchanged from the two previous experiments.

\paragraph{Training and evaluation.}
Training uses teacher forcing, warm-start pretraining
on the MSE objective, and subsequent activation of the OT term. Metrics are identical
to those reported in the main experiments: L1 histogram error over joint distributions
of the NS-equation statistics, radially-averaged spectral distance, and one-step RMSE.
\subsection{Experimental details for L-63}
\label{app:exp-details_l63}
\paragraph{Experimental Setup.}
We generate trajectories from the Lorenz-63 system
\begin{equation}
    \dot{x} = \sigma(y - x), \quad \dot{y} = x(\rho - z) - y, \quad \dot{z} = xy - \beta z
\end{equation}
with a set of parameters sampled around the classic setting $\sigma = 10$, $\rho = 28$, $\beta = 8/3$, producing chaotic dynamics on the characteristic butterfly attractor. Training data consists of 100 crops of 10{,}000 timesteps each (dt = 0.01), subsampled with a stride of 10 steps and corrupted with the same additive Gaussian $\sigma_{\text{noise}}$ as earlier.

We train a multilayer perceptron emulator $g_\psi: \mathbb{R}^3 \to \mathbb{R}^3$ with four hidden layers, each 128 units and GELU activation function to predict the next states. The emulator is regularized using Sinkhorn divergence ($\varepsilon=0.05$, $p=2$) computed on learned summary embeddings. Training uses anchored rollouts similarly to the Lorenz-96 setup, anchoring states every 2 timesteps, computing the OT distance between summary statistics of predicted and true trajectories over these windows

Given the low dimensionality of the Lorenz-63 system and our focus on interpretability, we explore two learnable summary  architectures: \emph{Linear summary:} $f_\varphi: \mathbb{R}^3 \to \mathbb{R}^d$ applies a single linear layer, yielding an interpretable one-dimensional projection whose wights $w \in \mathbb{R}^d$ can be directly analyzed. \emph{Nonlinear summary}: $f_\varphi: \mathbb{R}^{3} \to \mathbb{R}^{d}$ uses a three-layer MLP with hidden dimensions [64, 32] and GELU activation functions, allowing the network to discover nonlinear summary statistics.

Training alternates between one step updating the summary network $f_\varphi$ to maximize the Sinkhorn divergence and one step updating the emulator $g_\psi$ to minimize the combined reconstruction and OT loss. We apply gradient clipping with threshold 1.0 to both networks. OT regularization begins after a 10-epoch warm-up period with $\lambda_{\text{OT}} = 0.2$.
\newpage

\section{Additional Results for Lorenz-96} \label{app:add_l96}

\subsection{Multi-trajectory Lorenz-96 results}
For the L96 multi-trajectory setting, we report median performance with 25th and 75th
percentiles to match the evaluation protocol of \citet{jiang2023training}.

\begin{table}[H]
\centering
\setlength{\tabcolsep}{2pt}
\caption{Performance on noisy Lorenz--96 trajectories ($\sigma = 0.3$).
HS denotes handcrafted fixed summary statistics, while LS denotes learnable summaries.
Results report median performance (25th, 75th percentile) over runs.}
\label{tab:results_noisy}
\begin{tabular}{p{3cm}cccc}
\toprule
Method & $d$ & RMSE $\downarrow$ & $L^1$ hist. error $\downarrow$ & Spectral dist. $\downarrow$ \\
\midrule
{\textbf{Baseline}} \\
No OT & - 
& $\mathbf{0.388} \,(0.384, 0.395)$ & $0.348 \,(0.325, 0.377)$ & $0.321\, (0.312, 0.330)$\\
\addlinespace[0.3em]
\multicolumn{5}{l}{\textbf{HS}} \\
Fixed OT & 3 
& $0.398 \;(0.395, 0.403)$
& $0.175 \;(0.146, 0.208)$ 
& $0.141\; (0.135, 0.146)$ \\

\addlinespace[0.3em]
\multicolumn{5}{l}{\textbf{LS}} \\
Sinkhorn & $3$ & $0.406~(0.401, 0.413)$        & $0.175~(0.156, 0.198)$ & $0.112~(0.105, 0.118)$\\
WGAN & $3$ & $0.401~(0.398, 0.406)$ & $\mathbf{0.149}~(0.122, 0.184)$ & $\mathbf{0.049}~(0.041, 0.055)$ \\
WGAN & $1$ & $0.392\,(0.390,0.398)$ 
& $0.234\,(0.206,0.262)$ 
& $0.205\,(0.192,0.217)$ \\
\bottomrule
\end{tabular}
\end{table}

\begin{table}[H]
\centering
\setlength{\tabcolsep}{2pt}
\caption{Performance on clean Lorenz--96 trajectories ($\sigma = 0.0$).
HS denotes handcrafted fixed summary statistics, while LS denotes learnable summaries.
Results report median performance (25th, 75th percentile) over runs.}
\label{tab:results_clean}
\begin{tabular}{p{3cm}cccc}
\toprule
Method & $d$ & RMSE $\downarrow$ & $L^1$ hist. error $\downarrow$ & Spectral dist. $\downarrow$ \\
\midrule
\addlinespace[0.3em]
\multicolumn{5}{l}
{\textbf{Baseline}} \\
No OT & - 
& $\mathbf{0.302}\, (0.297, 0.307)$ & $0.225\, (0.211, 0.240)$ & $0.294\, (0.286, 0.301)$ \\
\addlinespace[0.3em]

{\textbf{HS}} \\
Fixed OT & 3 
& $0.313\,(0.307,0.320)$ 
& $\mathbf{0.055}\,(0.050,0.061)$ 
& $0.119\,(0.110,0.128)$ \\

\addlinespace[0.3em]
\multicolumn{5}{l}{\textbf{LS}} \\
Sinkhorn & $3$ & $0.325 ~(0.321, 0.330)$& $0.109~ (0.102, 0.115)$ & $0.083~(0.076, 0.089)$ \\
WGAN & $3$ & $0.318~(0.311, 0.324)$ & $0.083~(0.064, 0.099)$ & $\mathbf{0.074}~(0.061, 0.092)$ \\
WGAN & $1$ & $0.306\,(0.299,0.312)$ 
& $0.091\,(0.72,0.111)$ 
& $0.176\,(0.163,0.187)$ \\
\bottomrule
\end{tabular}
\end{table}

\newpage
\section{Lorenz-63: Additional Results}
\label{app:l63}

Due to the low dimensionality of the Lorenz--63 system, quantitative differences between
the no-OT, fixed-OT, and learnable-OT models were small. For this reason, we restrict
the Lorenz--63 analysis to visualization-based results that highlight the structure of
the learned summary representations and the effect of OT regularization on attractor
geometry under noise.

\subsection{Geometric structure of learned summaries}

Figure~\ref{fig:L63_projections_by_summary_linear} shows the Lorenz--63 attractor colored
by the learned summary value $\bs = f_\varphi(\bu)$ across the three canonical projections
$(x,y)$, $(x,z)$, and $(y,z)$, alongside the projection induced by the dominant eigenvector
of the displacement covariance $C_T$ (Proposition~\ref{prop:optimal_linear_summary}).\begin{figure}[H]
  \centering
  \begin{subfigure}{\linewidth}
    \centering
    \includegraphics[width=0.75\linewidth]
    {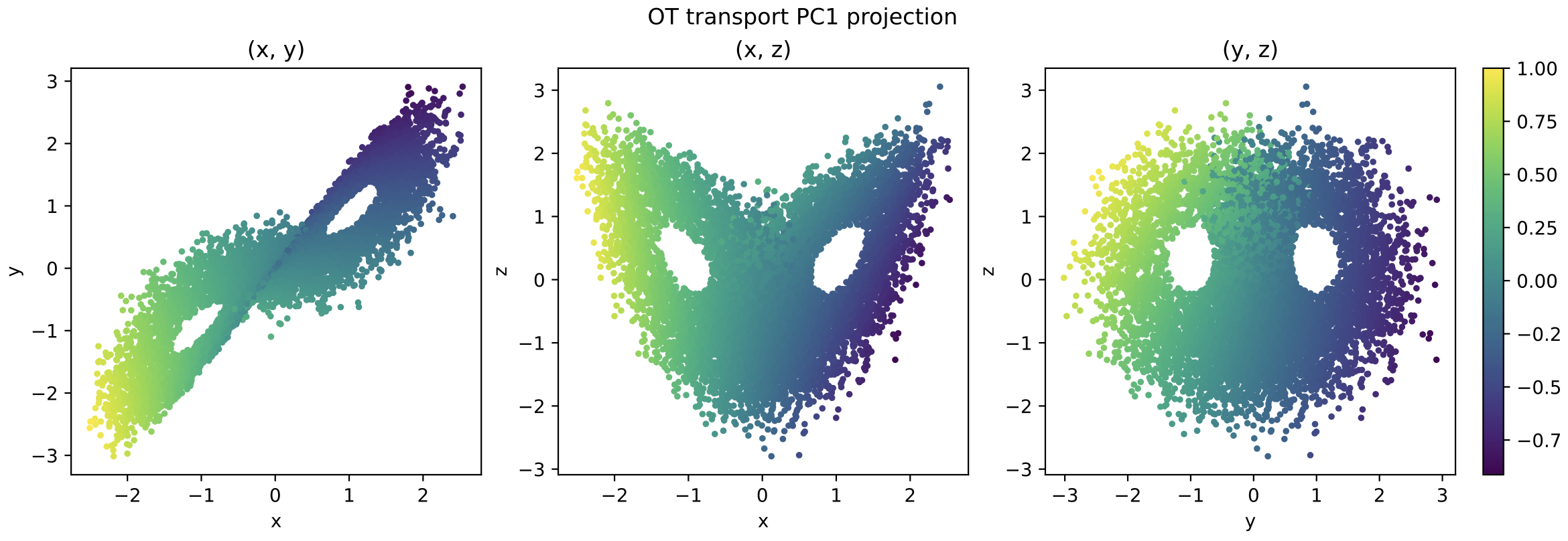}
    \subcaption{Optimal linear summary $f(\bu) = \langle \mathbf{v}_{\max}, \bu \rangle$,
    where $\mathbf{v}_{\max}$ is the eigenvector associated with the largest eigenvalue of
    $C_T$ (Proposition~\ref{prop:optimal_linear_summary}).}
  \end{subfigure}
  \begin{subfigure}{\linewidth}
    \centering
    \includegraphics[width=0.75\linewidth]
    {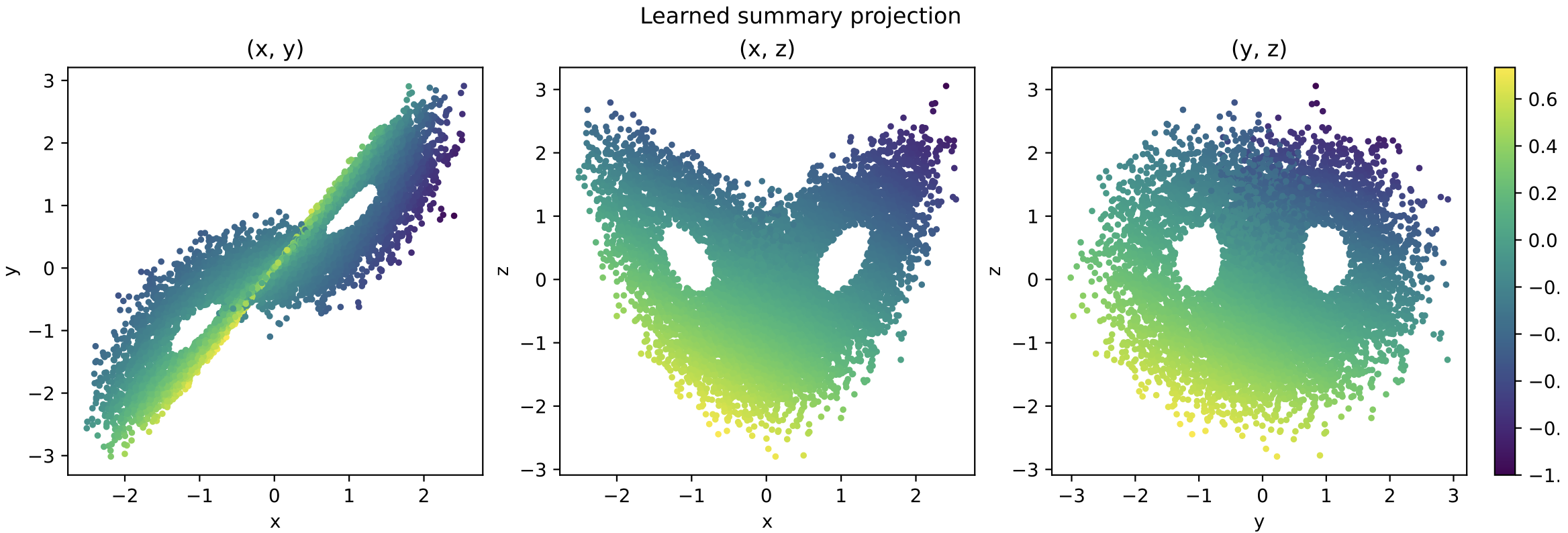}
    \subcaption{Learned summary $f_\varphi(\bu)$.}
  \end{subfigure}
  \caption{Lorenz-63 attractor colored by summary value across three orthogonal projections.
  Color intensity from purple (negative) to yellow (positive) indicates the summary value.}
  \label{fig:L63_projections_by_summary_linear}
\end{figure}

Proposition~\ref{prop:optimal_linear_summary} identifies linear summary directions that
maximize the captured OT discrepancy, with $C_T$ determining the optimal subspace. The
visualizations confirm that the learned summaries capture substantial alignment with these
transport-optimal directions, supporting the practical relevance of the proposition even
when its exact assumptions are not perfectly satisfied.

\subsection{State-space histograms}

Figure~\ref{fig:app_l63_traj_hist} shows long-run state-space visitation histograms,
comparing the ground truth system against the emulator trained with the adversarial OT
objective. Both histograms are computed from a single long trajectory after transient
removal.

\begin{figure}[H]
\centering
\includegraphics[width=0.75\textwidth]{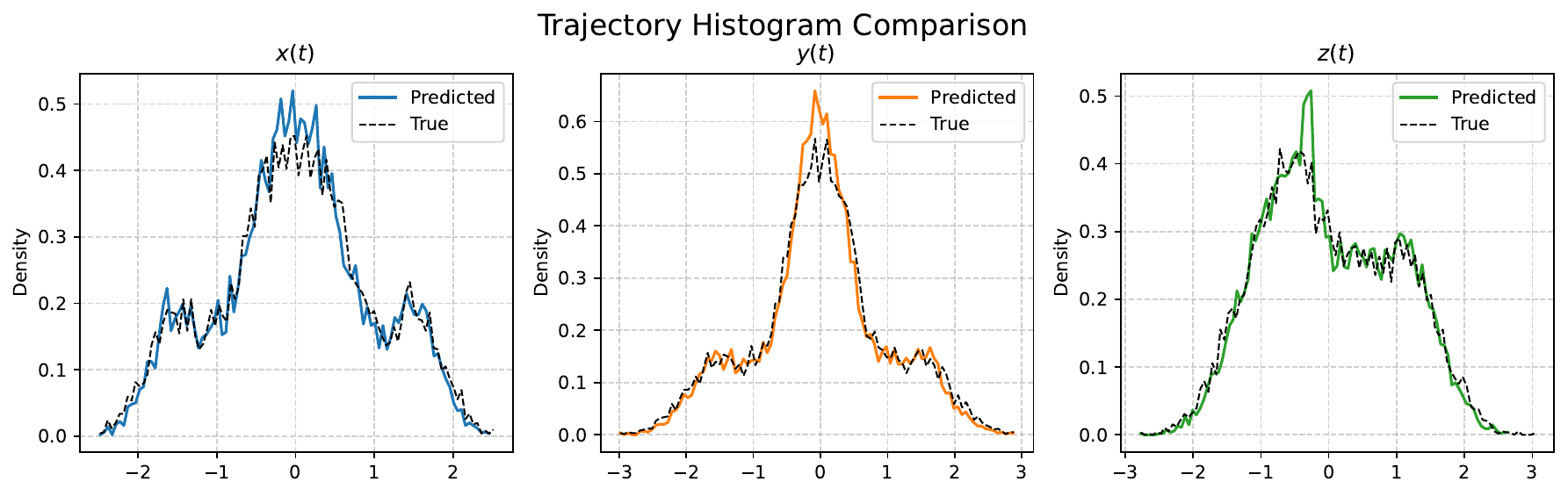}
\caption{State-space histograms for Lorenz-63. \textit{Dashed:} Ground truth.
\textit{Colored:} Trained emulator.}
\label{fig:app_l63_traj_hist}
\end{figure}

\subsection{Distribution of learned summary statistics}

Figure~\ref{fig:app_l63_summary_hist} compares the distribution of the learned summary
statistic $\bs$ extracted from ground truth trajectory segments against emulator-generated
segments.

\begin{figure}[H]
\centering
\includegraphics[width=0.25\textwidth]{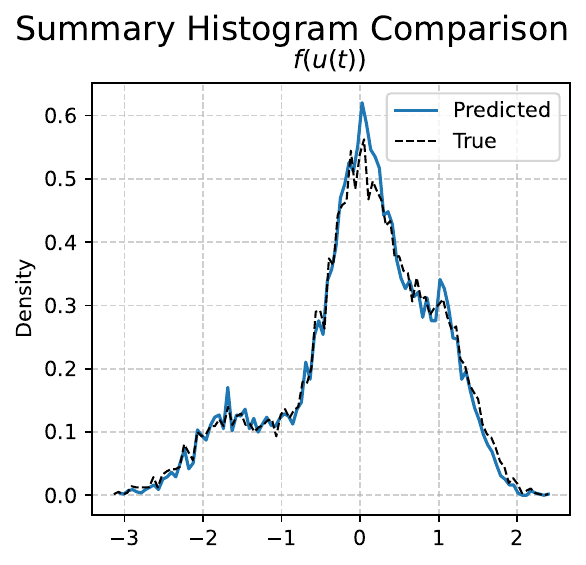}
\caption{Distribution of the learned summary statistic $f_\phi$ for Lorenz-63.
\textit{Dashed:} Ground truth trajectories. \textit{Colored:} Emulator trajectories.}
\label{fig:app_l63_summary_hist}
\end{figure}
\newpage
\section{Additional Visualization Results} \label{app:extra_results}

\subsection{L96: Attractor visualization}

\begin{figure}[H]
    \centering
    \includegraphics[width=\textwidth]{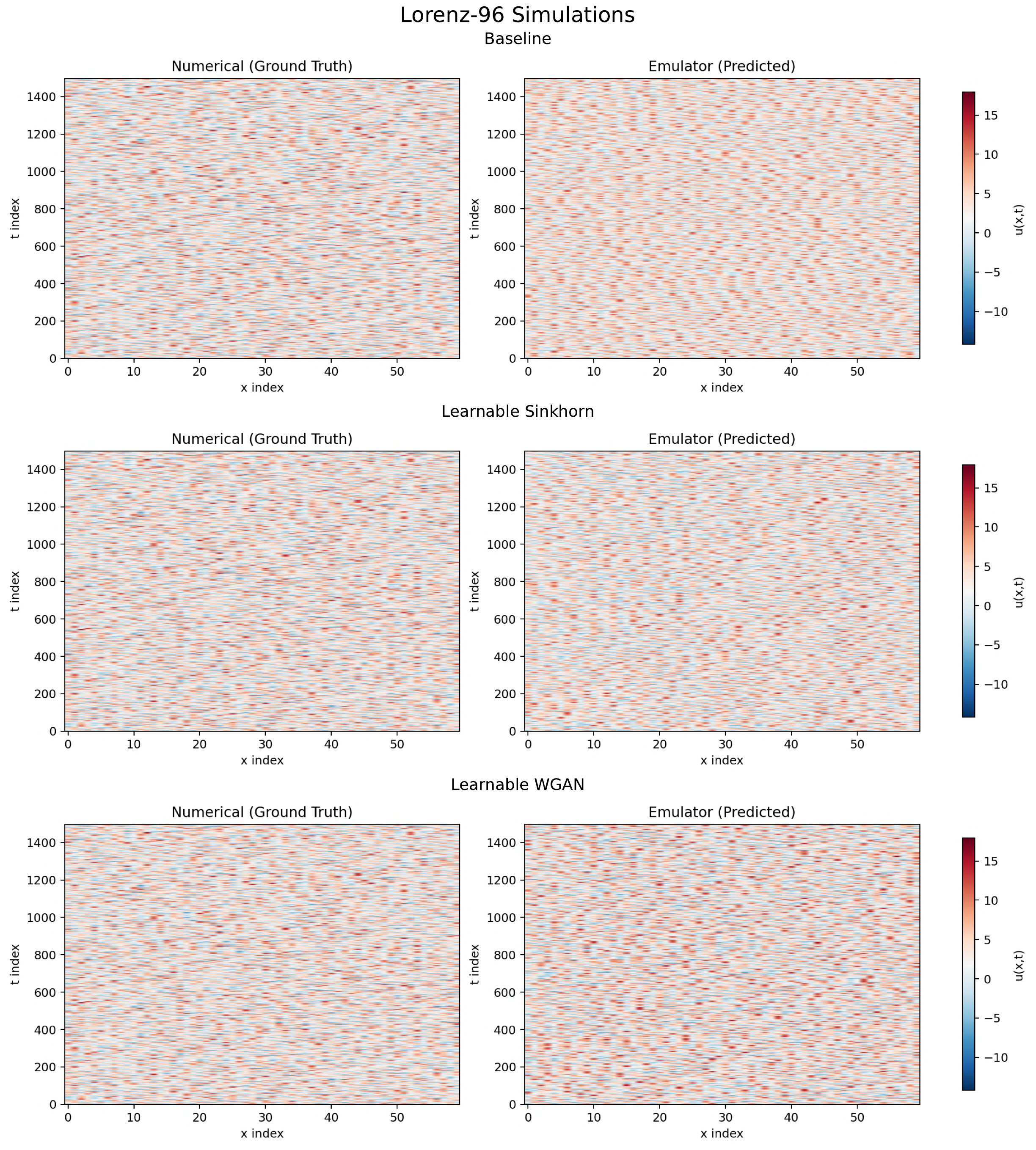}
    \caption{Space-time plots of $\bu(x,t)$ for the Lorenz--96 system ($d=60$),
    comparing ground truth numerical simulations (left) against emulator rollouts (right)
    over 1{,}500 timesteps. Each row corresponds to a different method. As expected for chaotic systems, pointwise trajectory
    agreement is not maintained beyond the Lyapunov time; the relevant comparison is
    the statistical structure of the attractor over long horizons (c.f. Tables~\ref{tab:unified_results}.}
    \label{fig:l96_rollout}
\end{figure}

\subsection{KS: Attractor visualization}

\begin{figure}[H]
    \centering
    \includegraphics[width=\textwidth]{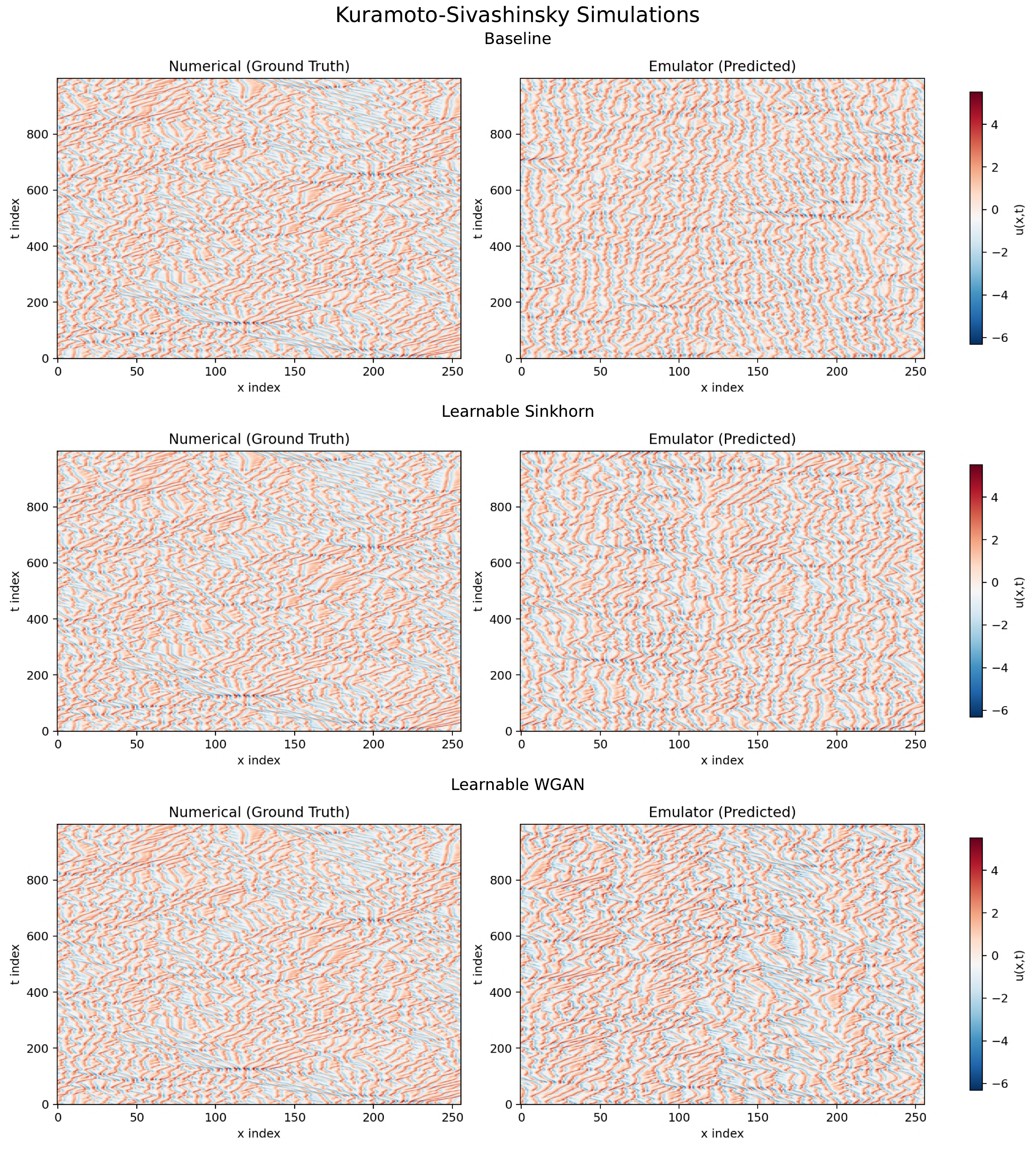}
    \caption{Space-time plots of $\bu(x,t)$ for the Kuramoto--Sivashinsky equation ($d=256$),
    comparing ground truth numerical simulations (left) against emulator rollouts (right)
    over 1{,}000 timesteps. Each row corresponds to a different method. Trajectory-level correspondence
    is not expected beyond the Lyapunov time due to chaos; the relevant comparison is the
    statistical structure of the attractor rather than pointwise agreement. See Tables~\ref{tab:unified_results}  for computed metrics.}
    \label{fig:ks_rollout}
\end{figure}

\subsection{2D Kolmogorov: Attractor visualization}

\begin{figure}[H]
\centering
\includegraphics[width=\textwidth]{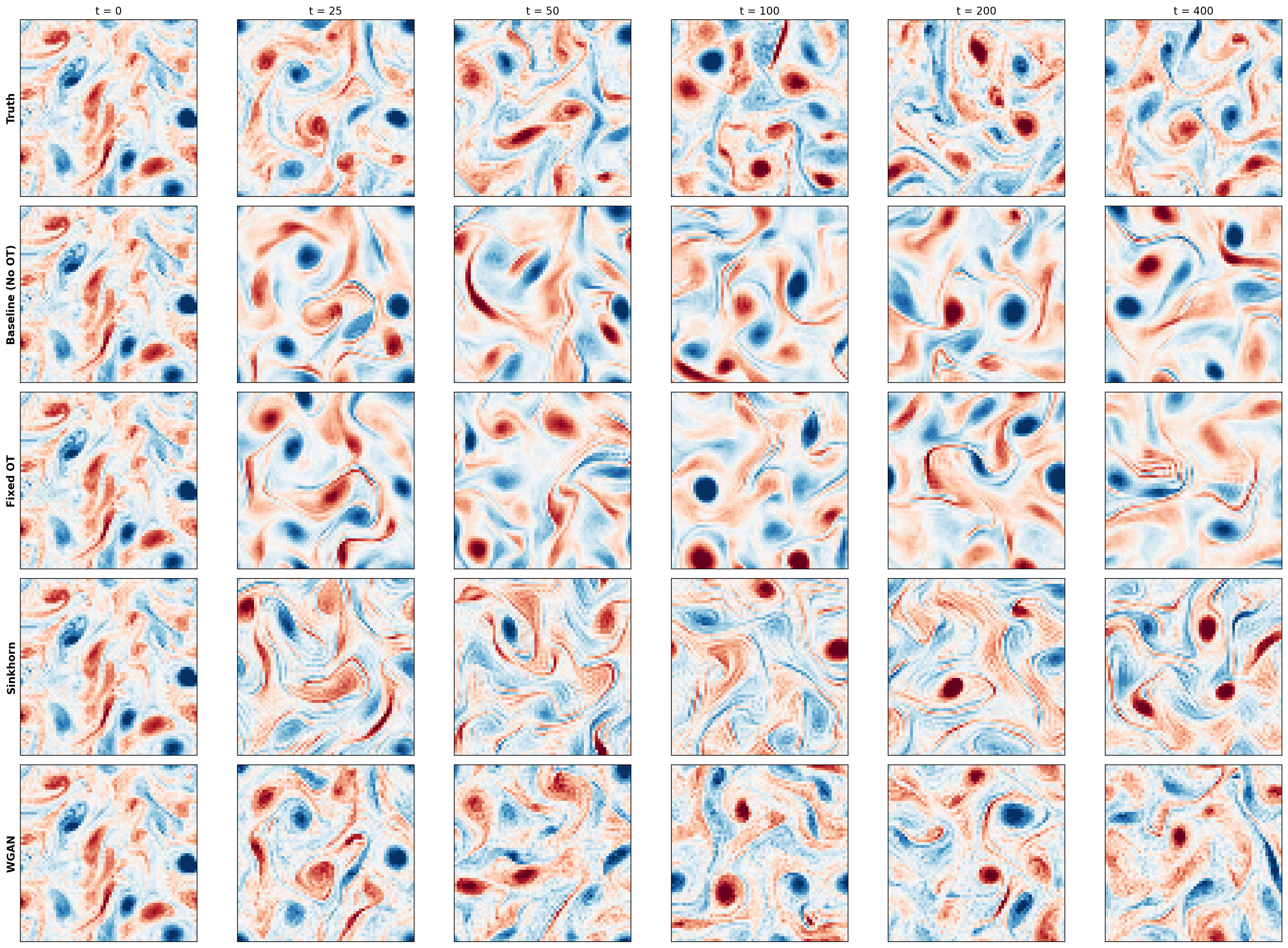}
\caption{Vorticity rollouts for 2D Kolmogorov flow ($\mathrm{Re} = 10^4$, $\alpha = 0.1$)
    at selected timesteps. Each row shows autoregressive predictions from a different model
    alongside the ground truth (top row). For quantitative analysis, see Tables~\ref{tab:unified_results}.}
    \label{fig:kolmog_attractor}
\end{figure}

\end{document}